\documentclass[runningheads]{llncs}
\usepackage[T1]{fontenc}
\usepackage{graphicx}
\usepackage{color}
\usepackage{amssymb,mathtools}
\usepackage{mathrsfs}
\usepackage{bm}
\usepackage{enumitem}
\usepackage{hyperref}
\usepackage{microtype}
\usepackage{wrapfig}
\newcommand{\R}{\mathbb{R}}
\newcommand{\E}{\mathbb{E}}
\newcommand{\Cov}{\mathrm{Cov}}

\newcommand{\grad}{\mathrm{grad}}
\newcommand{\diver}{\mathrm{div}}
\newcommand{\Id}{\mathrm{Id}}
\newcommand{\cP}{\mathcal{P}}
\newcommand{\T}{\mathbb{T}}
\newcommand{\reg}{\mathrm{reg}}
\newcommand{\tr}{\mathrm{tr}}
\newcommand{\normg}[1]{\lVert #1 \rVert_g}
\newcommand{\norm}[1]{\lVert #1 \rVert}

\newtheorem{assumption}[theorem]{Assumption}

\begin{document}
\title{A Link between Shock-wave Theory and Symmetry-reduced Stochastic Gradient Descent for Artificial Neural Networks}
%
%
\author{Taiki Miyagawa}
%
\authorrunning{Taiki Miyagawa}
%
\institute{
    NEC Corporation
    \email{miyagawataik@nec.com}
}
\maketitle              
\begin{abstract}
We develop a mathematically explicit link between shock-wave theory and the symmetry-quotiented learning dynamics of stochastic gradient descent, drawing on differential geometry, Lie group theory, and fluid mechanics.
Specifically, after \textit{quotienting parameter symmetries} and applying \textit{local-entropy coarse-graining}, the effective dynamics satisfy a viscous Hamilton--Jacobi equation on the quotient manifold.
Moreover, under the assumption that the raw parameter dynamics can be summarized by a gradient field on the quotiented space, the gradient of the coarse-grained loss function obeys a Burgers-type equation, and shock formation can be established rigorously.
We apply our theory to multilayer perceptrons, convolutional neural networks, Transformers, and mean-field networks, and show that they obey the Hamilton--Jacobi or Burgers-type equations.
We conjecture that this framework also yields practical diagnostics for deep learning. In architectures such as Transformers, raw parameter norms are often distorted by symmetry redundancy and may therefore be misleading, whereas symmetry-corrected quotient observables provide a principled basis for monitoring, forecasting, and controlling training-phase transitions.
\keywords{Shock waves \and Stochastic gradient descent \and Hamilton--Jacobi equation \and Burgers equation.}
\end{abstract}

\section{Introduction}

Combining the following insights, we propose a correspondence between shock wave theory and symmetry-quotiented learning dynamics of stochastic gradient descent (SGD).
Shock waves in fluid mechanics are governed by nonlinear transport, loss of classical regularity, and weak-solution selection by entropy conditions. Deep learning, by contrast, is usually formulated as a high-dimensional stochastic optimization problem. 
Several established mathematical facts suggest a principled bridge. 
First, positively homogeneous neural networks such as ReLU networks possess positive rescalings and permutations, so physically meaningful observables often live on \textit{quotient spaces} rather than in raw parameter coordinates \cite{Neyshabur2015,Rangamani2019}. 
Second, discrete-time SGD admits continuous-time approximations in the form of stochastic modified equations and stochastic modified flows \cite{LiTaiE2019,GessKassingKonarovskyi2024}. 
Third, local-entropy relaxations of nonconvex losses are governed by viscous Hamilton--Jacobi equations \cite{ChaudhariObermanOsherSoattoCarlier2018}. 
Fourth, in wide-network limits, SGD induces diffusion equations rather than simple finite-dimensional ordinary differential equations \cite{MeiMontanariNguyen2018,SirignanoSpiliopoulos2022}.

The purpose of this paper is to connect these ingredients in a single rigorous architecture.\footnote{
    Our position is deliberately conservative: We do not claim that generic neural-network parameters satisfy Burgers' equation.
}
We claim that \textit{symmetry quotient plus local-entropy coarse-graining} naturally yields a viscous Hamilton--Jacobi equation on a \textit{quotient space}. 
Furthermore, we prove that if a gradient field on the quotient space summarizes the dynamics on the raw parameter space (referred to in this paper as the closedness assumption for a one-dimensional collective coordinate), then, the quotiented gradient field obeys a Burgers-type equation, with shock formation time controlled by the negative curvature of the coarse-grained loss function.
This yields a precise mathematical reinterpretation of abrupt training-regime changes: in the quotient description, they appear as shock-type singularities or viscous shock layers in the coarse-grained average gradient. 

Beyond its mathematical correspondence with Hamilton--Jacobi and Burgers-type equations, we conjecture that the present framework has a practical meaning for modern deep learning systems. The main point is that the theory identifies which variables should be monitored, which quantities should be interpreted as early-warning signals of regime change, and which hyperparameters act as control knobs for smoothing or sharpening such transitions. 


\section{Preliminaries} \label{sec:Preliminaries}
\subsection{Definition}
\begin{wrapfigure}{r}{0.5\textwidth}
    \begin{center}
    \includegraphics[width=0.48\textwidth]{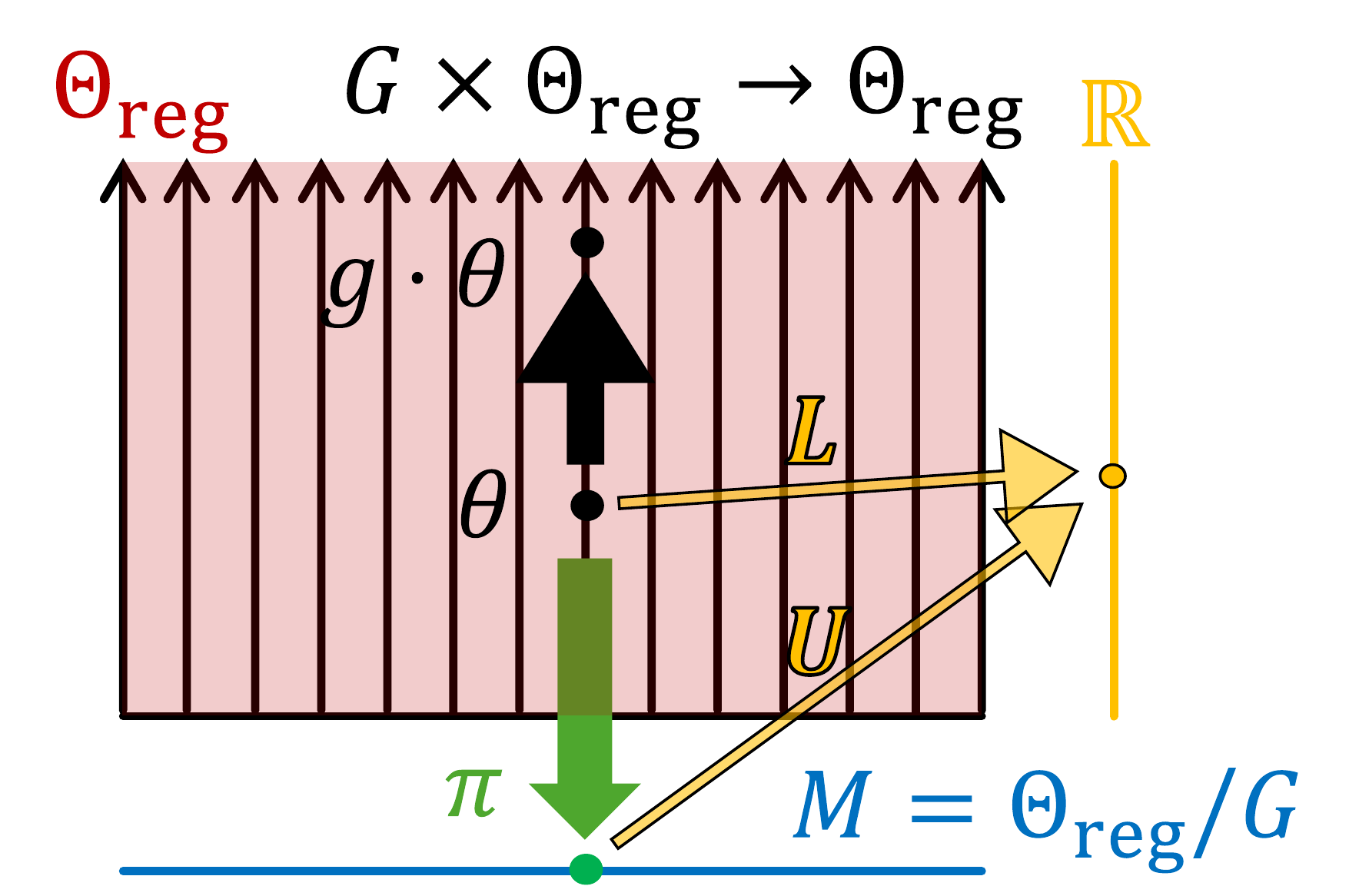}
    \end{center}
    \caption{\textbf{Notation.}} 
    \label{fig:Notation}
\end{wrapfigure}
Let $\Theta \subset \R^{d_\Theta}$ be a smooth parameter manifold, and let a Lie group or finite group $G$ act smoothly on $\Theta$. We assume that there is an open regular stratum $\Theta_{\reg} \subset \Theta$ on which the action is free and proper. 
Then, the quotient
$M := \Theta_{\reg}/G$
forms a smooth manifold, and the quotient map $\pi : \Theta_{\reg} \to M$ is a smooth submersion. 
In the finite-group case, the global quotient may be an orbifold, but on each principal stratum, the local manifold picture is valid.
Let $L : \Theta_{\reg} \to \R$ be a smooth empirical loss satisfying
$L(g \cdot \theta) = L(\theta)$
for all $g \in G$ and $\theta \in \Theta_{\reg}$. 
Then, $L$ descends to a smooth function, \textit{effective potential }, $U : M \to \R$ such that $L = U \circ \pi$.
We consider the stochastic iteration
$\theta_{n+1} = \theta_n - \eta \bigl(\nabla L(\theta_n) + M_{n+1}\bigr),$
where $\eta > 0$ is the learning rate and $(M_{n+1})_{n \ge 0}$ is a martingale-difference sequence adapted to a filtration $(\mathcal{F}_n)_{n \ge 0}$, that is,
$\E[M_{n+1} \mid \mathcal{F}_n] = 0.$
We assume throughout that on compact subsets of $\Theta_{\reg}$,
$\E\bigl[\norm{M_{n+1}}^3 \mid \mathcal{F}_n\bigr] \le C$
for some local constant $C$.
We will repeatedly use the standard Hopf--Cole transform\footnote{
    It is a change of variables that transforms a special type of parabolic partial differential equations (PDEs) with a quadratic nonlinearity into a linear heat equation.
} for viscous Burgers and viscous Hamilton--Jacobi equations, together with classical characteristic theory for inviscid Burgers; standard references include Evans and LeVeque \cite{Evans2010,LeVeque1992}.

\subsection{Quotient Reduction}
We first show that, under a \textit{local projectability assumption}, the discrete-time SGD recursion descends in a quotient chart to a closed stochastic recursion whose drift and conditional covariance depend only on the quotient state. This is the basic reduction that allows us to replace the raw parameter dynamics by an effective dynamics on the symmetry quotient space.

\begin{assumption}[Local projectability of drift and covariance]\label{ass:projectability}
    Let $\chi : U \subset M \to \R^m$ be a smooth chart and set $\Phi := \chi \circ \pi$. Assume the trajectory remains in $\pi^{-1}(U)$ almost surely, and that $\Phi$ is $C^3$ on the relevant compact subset. Assume further that there exist locally bounded functions
    $b : \chi(U) \to \R^m$ and $A : \chi(U) \to \R^{m \times m}$
    such that, almost surely,
    $D\Phi(\theta_n)\nabla L(\theta_n) = b(Y_n)$ (referred to as \textit{drift} in this paper), $Y_n := \Phi(\theta_n),$
    and
    $\Cov\bigl(D\Phi(\theta_n)M_{n+1} \mid \mathcal{F}_n\bigr) = A(Y_n).$\footnote{
        Here, $D\Phi(\theta)$ denotes the derivative (Jacobian) of the quotient-chart map $\Phi$ at $\theta$, which linearly maps infinitesimal parameter-space displacements to quotient-coordinate displacements. Accordingly, $D\Phi(\theta_n)\nabla L(\theta_n)$ is the gradient projected into the quotient coordinates, and $D\Phi(\theta_n)M_{n+1}$ is the noise projected into the quotient coordinates.
    }
\end{assumption}
Assumption~\ref{ass:projectability} requires that, in a local quotient chart, the projected gradient drift\footnote{
    Drift is the deterministic mean component of a stochastic update, that is, the average direction of motion after averaging out the random fluctuations.
} $b(Y_n)$ and the conditional covariance of the projected martingale noise depend only on the quotient state $Y_n$. 
Thus, the stochastic evolution closes at the level of the quotient variables: different representatives of the same symmetry orbit induce the same effective first- and second-order dynamics after projection.\footnote{
    ReLU networks are a natural class of models in which to consider Assumption 1, because they do possess the relevant symmetries. However, they do not satisfy Assumption 1 automatically in full generality. On a regular stratum, they satisfy it locally if the projected drift and covariance close as functions only of the quotient.
}

\begin{theorem}[Local quotient reduction for discrete-time SGD]\label{thm:quotient_sgd}
Under Assumption~\ref{ass:projectability}, there exist random variables $\Xi_{n+1}$ and $R_n$ such that
$Y_{n+1} = Y_n - \eta b(Y_n) + \eta \Xi_{n+1} + \eta^2 R_n,$
with
$\E[\Xi_{n+1} \mid \mathcal{F}_n] = 0$, $ \Cov(\Xi_{n+1} \mid \mathcal{F}_n) = A(Y_n),$
and $R_n$ locally bounded in conditional expectation.
\end{theorem}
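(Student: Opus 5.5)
The natural approach is a second-order Taylor expansion of the chart map $\Phi=\chi\circ\pi$ along the SGD increment. Write $\Delta_n := \theta_{n+1}-\theta_n = -\eta\bigl(\nabla L(\theta_n)+M_{n+1}\bigr)$. Since the trajectory stays in $\pi^{-1}(U)$ and $\Phi$ is $C^3$ on the relevant compact set $K$, Taylor's theorem with integral remainder gives
\[
Y_{n+1} = \Phi(\theta_n) + D\Phi(\theta_n)\Delta_n + \int_0^1 (1-s)\,D^2\Phi(\theta_n+s\Delta_n)[\Delta_n,\Delta_n]\,ds .
\]
By Assumption~\ref{ass:projectability}, the linear term is $-\eta b(Y_n) - \eta D\Phi(\theta_n)M_{n+1}$. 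We therefore set $\Xi_{n+1} := -D\Phi(\theta_n)M_{n+1}$ and let $\eta^2 R_n$ be the quadratic remainder, i.e.\ $R_n := \eta^{-2}\int_0^1(1-s)D^2\Phi(\theta_n+s\Delta_n)[\Delta_n,\Delta_n]\,ds$. This produces the claimed recursion as an exact identity.

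The martingale properties of $\Xi_{n+1}$ follow at once. $D\Phi(\theta_n)$ is $\mathcal{F}_n$-measurable, since $\theta_n$ is adapted. Hence $\E[\Xi_{n+1}\mid\mathcal{F}_n] = -D\Phi(\theta_n)\E[M_{n+1}\mid\mathcal{F}_n]=0$. Moreover, $\Cov(\Xi_{n+1}\mid\mathcal{F}_n)=\Cov(D\Phi(\theta_n)M_{n+1}\mid\mathcal{F}_n)=A(Y_n)$, because the sign flip does not affect the covariance. Conditional square-integrability is guaranteed by the third-moment bound on $M_{n+1}$ together with the boundedness of $D\Phi$ on $K$.

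The only substantive step is bounding $R_n$. Note that $\Delta_n/\eta = -(\nabla L(\theta_n)+M_{n+1})$, so
\[
\norm{R_n} \le \tfrac12 \sup_{\xi\in[\theta_n,\theta_{n+1}]}\norm{D^2\Phi(\xi)}\,\bigl(\norm{\nabla L(\theta_n)}+\norm{M_{n+1}}\bigr)^2 .
\]
On $K$, $\nabla L$ is bounded, and $\E[\norm{M_{n+1}}^2\mid\mathcal{F}_n]\le C^{2/3}$ by Jensen's inequality applied to the third-moment assumption. This gives $\E[\norm{R_n}\mid\mathcal{F}_n]\le C'$ locally, with $C'$ depending on $K$.

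The main obstacle is that the segment $[\theta_n,\theta_{n+1}]$ need not lie in $K$, or even in the domain of $\Phi$, so $\sup\norm{D^2\Phi}$ is not automatically controlled. I would handle this by reading ``$\Phi$ is $C^3$ on the relevant compact subset'' as applying to a compact convex neighbourhood (or $\delta$-thickening) $K_\delta$ of the region visited. On the event $\{\norm{\Delta_n}\le\delta\}$ the segment stays in $K_\delta$ and the bound above holds. On the complementary event, $\Phi(\theta_{n+1})$ and $\Phi(\theta_n)$ both lie in the bounded set $\chi(U)$ (after shrinking $U$ to be relatively compact), so $\norm{\eta^2R_n}$ is bounded by a constant plus the linear terms. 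Markov's inequality with the third moment gives $P(\norm{\Delta_n}>\delta\mid\mathcal{F}_n)\le \eta^3 C''/\delta^3$. Combining the two events yields $\E[\norm{R_n}\mid\mathcal{F}_n]$ bounded uniformly for $\eta$ small. This completes the reduction, with the $C^3$ hypothesis used only to guarantee that $D^2\Phi$ is continuous and hence bounded on the compact set.
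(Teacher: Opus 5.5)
Your proposal is correct and follows the paper's argument: you Taylor-expand $\Phi$ along $\Delta_n$, set $\Xi_{n+1} := -D\Phi(\theta_n)M_{n+1}$, read off the drift and covariance from Assumption~\ref{ass:projectability}, and absorb the rest into $\eta^2 R_n$. The differences are minor. The paper expands to second order with a cubic remainder controlled by the third moment, then lumps the quadratic term in as $O_{L^1}(\eta^2)$; your single integral remainder needs only the (Jensen-derived) second moment, and you explicitly handle the case where the segment $[\theta_n,\theta_{n+1}]$ leaves the compact set, which the paper passes over.
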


\begin{proof}
Write
$\Delta_n := \theta_{n+1} - \theta_n = -\eta \bigl(\nabla L(\theta_n) + M_{n+1}\bigr).$
Taylor's theorem with integral remainder gives
$\Phi(\theta_n + \Delta_n) = \Phi(\theta_n) + D\Phi(\theta_n)[\Delta_n] + \frac{1}{2} D^2\Phi(\theta_n)[\Delta_n,\Delta_n] + \mathcal{R}_{n+1},$
where, for some random point on the segment joining $\theta_n$ and $\theta_n + \Delta_n$,
$\norm{\mathcal{R}_{n+1}} \le C \norm{\Delta_n}^3.$
Since $\E[\norm{M_{n+1}}^3 \mid \mathcal{F}_n] \le C$ locally, we have $\E[\norm{\Delta_n}^3 \mid \mathcal{F}_n] = O(\eta^3)$ and hence
$\E[\norm{\mathcal{R}_{n+1}} \mid \mathcal{F}_n] = O(\eta^3).$
Moreover,
$D^2\Phi(\theta_n)[\Delta_n,\Delta_n] = O_{L^1}(\eta^2)$
locally, because $\Delta_n = O(\eta)$ in conditional $L^2$.

Therefore,
\begin{equation*}
\begin{aligned}
Y_{n+1} - Y_n
&= D\Phi(\theta_n)[\Delta_n] + O_{L^1}(\eta^2) \\
&= -\eta D\Phi(\theta_n)[\nabla L(\theta_n)] - \eta D\Phi(\theta_n)[M_{n+1}] + O_{L^1}(\eta^2).
\end{aligned}
\end{equation*}
Define
$\Xi_{n+1} := - D\Phi(\theta_n)[M_{n+1}].$
Because $D\Phi(\theta_n)$ is $\mathcal{F}_n$-measurable and $\E[M_{n+1} \mid \mathcal{F}_n] = 0$, we obtain
$\E[\Xi_{n+1} \mid \mathcal{F}_n] = 0.$
By Assumption~\ref{ass:projectability},
$D\Phi(\theta_n)[\nabla L(\theta_n)] = b(Y_n)$
a.s., and
$\Cov(\Xi_{n+1} \mid \mathcal{F}_n) = A(Y_n).$
Collecting the $O_{L^1}(\eta^2)$ terms into $\eta^2 R_n$ yields the claimed recursion. \qed
\end{proof}

If the chart is fixed and the coefficients are sufficiently regular, then the recursion above admits, with $\eta n \fallingdotseq t$, the standard weak continuous-time approximation
$dY_t = - b(Y_t) \, dt + \sqrt{\eta} \, \sigma(Y_t) \, dB_t$ and $\sigma \sigma^{\top} = A,$
which is the symmetry-reduced analogue of stochastic modified equations and stochastic modified flows for SGD \cite{LiTaiE2019,GessKassingKonarovskyi2024}.
The benefit of the theorem is fundamental for the rest of the paper.

\subsection{Coarse-graining, Quotient Local Entropy}
Let $(M,g)$ be a complete Riemannian manifold with the Laplace--Beltrami operator $\Delta_M$.\footnote{
    $\Delta_M$ denotes the Laplace–Beltrami operator on the quotient Riemannian manifold $(M,g)$, i.e. the geometric Laplacian $\Delta_M f=\operatorname{div}_g(\operatorname{grad} f)$ governing diffusion on $M$.
} 
Let $U : M \to \R$ be a smooth \textit{effective potential} on the quotient, or equivalently a regularized effective loss function. 
Define the heat semigroup
$P_t := e^{\frac{t}{2} \Delta_M}.$
For viscosity parameter $\nu > 0$ and coarse-graining scale $\tau \ge 0$, define the \textit{local-entropy regularization} $u^\nu(\tau,q) := -\nu \log\bigl(P_{\nu \tau} e^{-U/\nu}(q)\bigr)$, where $q \in M$.
\textit{Coarse-graining} refers to replacing the raw parameter dynamics by an effective dynamics obtained after symmetry quotienting and local-entropy smoothing on the symmetry quotient.
The discrete SGD time $n$ and the continuous heat time $t$ are identified at the scaling level through $ \eta n \fallingdotseq t = \nu \tau$, where $\nu$ denotes the viscosity, equivalently coarse-graining scale, and $\tau$ is a continuous normalized coarse-graining parameter.

\section{Hamilton--Jacobi Equation} \label{sec:HJE}
We propose that symmetry quotient plus local-entropy coarse-graining yields a viscous Hamilton--Jacobi equation on a quotient space:

\begin{theorem}[Quotient Hamilton--Jacobi equation]\label{thm:HJ}
Assume $U \in C^2(M)$, assume $P_t e^{-U/\nu}$ is strictly positive for $t \ge 0$, and assume the function
$w(\tau,q) := P_{\nu \tau} e^{-U/\nu}(q)$
belongs to $C^{1,2}((0,\infty) \times M)$ and solves the heat equation pointwise:
$\partial_\tau w = \frac{\nu}{2} \Delta_M w$, and $w(0,q) = e^{-U(q)/\nu}.$\footnote{
    If $P_t$ is the heat semigroup generated by $\frac12\Delta_M$ and the required regularity holds, then, $w(\tau,q)=P_{\nu\tau}e^{-U/\nu}(q)$ solves $\partial_\tau w=\frac{\nu}{2}\Delta_M w$ automatically.
}  
Then, the function $u^\nu$ defined by
$u^\nu(\tau,q) = -\nu \log w(\tau,q)$
solves
$\partial_\tau u^\nu + \frac{1}{2} \normg{\grad u^\nu}^2 = \frac{\nu}{2} \Delta_M u^\nu$, $u^\nu(0,q) = U(q).$
\end{theorem}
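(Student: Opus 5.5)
The proof is a direct Hopf--Cole computation: we differentiate $u^\nu = -\nu\log w$ and substitute the heat equation for $w$. Since $w>0$ and $w\in C^{1,2}((0,\infty)\times M)$, the composition $u^\nu$ is $C^{1,2}$ on $(0,\infty)\times M$, so every derivative below exists classically and pointwise. The initial condition is immediate: $u^\nu(0,q) = -\nu\log e^{-U(q)/\nu} = U(q)$.

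First, the time derivative is
\[
\partial_\tau u^\nu = -\nu\,\frac{\partial_\tau w}{w} = -\frac{\nu^2}{2}\,\frac{\Delta_M w}{w}.
\]
Next, by the chain rule for the Riemannian gradient,
\[
\grad u^\nu = -\nu\,\frac{\grad w}{w},
\qquad\text{so}\qquad
\normg{\grad u^\nu}^2 = \nu^2\,\frac{\normg{\grad w}^2}{w^2}.
\]
For the Laplacian, we use $\Delta_M = \diver_g\circ\grad$ together with the product rule $\diver_g(fX) = f\,\diver_g X + g(\grad f, X)$, applied with $f = 1/w$ and $X = \grad w$. This gives
\[
\Delta_M\log w = \frac{\Delta_M w}{w} - \frac{\normg{\grad w}^2}{w^2},
\qquad\text{hence}\qquad
\Delta_M u^\nu = -\nu\,\frac{\Delta_M w}{w} + \nu\,\frac{\normg{\grad w}^2}{w^2}.
\]

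Combining the three identities,
\[
\partial_\tau u^\nu + \tfrac12\normg{\grad u^\nu}^2
= -\frac{\nu^2}{2}\frac{\Delta_M w}{w} + \frac{\nu^2}{2}\frac{\normg{\grad w}^2}{w^2}
= \frac{\nu}{2}\,\Delta_M u^\nu .
\]
This is exactly the stated viscous Hamilton--Jacobi equation.

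There is no real obstacle. The only point needing care is to state the chain and product rules for $\grad$ and $\diver_g$ on $(M,g)$ correctly, which is standard local-coordinate calculus. Positivity of $w$ ensures the logarithm and the quotients are well defined. The regularity hypothesis is what makes the computation pointwise rather than only in a weak or viscosity sense. Completeness of $M$ plays no role beyond guaranteeing that the heat semigroup is defined, and that is already covered by the hypotheses.
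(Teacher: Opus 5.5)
Your proposal is correct and follows the paper's proof essentially line for line. It is the same Hopf--Cole computation of $\partial_\tau u^\nu$, $\grad u^\nu$ and $\Delta_M u^\nu$ via the identity for $\Delta_M \log w$, with the initial condition read off from $w(0,\cdot)=e^{-U/\nu}$ (which the paper phrases as $P_0=\Id$); your only addition, deriving that identity from $\diver_g(fX)=f\,\diver_g X+g(\grad f,X)$, spells out a step the paper simply quotes.
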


\begin{proof}
Because $w > 0$, the logarithm is well defined. By differentiation,
$\partial_\tau u^\nu = -\nu \frac{\partial_\tau w}{w} = - \frac{\nu^2}{2} \frac{\Delta_M w}{w}.$
Also,
$\grad u^\nu = -\nu \frac{\grad w}{w}, \qquad \normg{\grad u^\nu}^2 = \nu^2 \frac{\normg{\grad w}^2}{w^2}.$
Using the identity
$\Delta_M(\log w) = \frac{\Delta_M w}{w} - \frac{\normg{\grad w}^2}{w^2},$
we obtain
$\Delta_M u^\nu = -\nu \frac{\Delta_M w}{w} + \nu \frac{\normg{\grad w}^2}{w^2}.$
Therefore,
$\partial_\tau u^\nu + \frac{1}{2} \normg{\grad u^\nu}^2 - \frac{\nu}{2} \Delta_M u^\nu = 0.$
The initial condition follows from $P_0 = \Id$. \qed 
\end{proof}

Thm.~\ref{thm:HJ} shows that, \textit{after symmetry quotienting, quotient local entropy is not merely a heuristic smoothing of the loss, but exactly the viscous Hamilton--Jacobi evolution of the effective potential $U$ on the quotient manifold $M$.} 
Crucially, the nonlinear term $\frac{1}{2}|\operatorname{grad} u^\nu|g^2$ is the mechanism that drives characteristic steepening: in the small-viscosity regime, it tends to sharpen gradients and compress information into increasingly narrow transition layers. Thus, Thm.~\ref{thm:HJ} already contains the mathematical precursor of shock formation. The viscosity term $\frac{\nu}{2}\Delta_M u^\nu$ does not remove this mechanism; rather, it regularizes it, replacing discontinuous shocks by thin viscous shock layers. This is precisely why the quotient Hamilton–Jacobi equation is the correct entry point to the shock-wave interpretation of symmetry-reduced SGD.

The benefit of the theorem is threefold. First, it gives a mathematically controlled effective landscape on which sharp geometric features are regularized by a viscosity parameter $\nu$. Second, it makes available the standard Hamilton--Jacobi toolbox, including Hopf--Cole linearization, semigroup methods, comparison arguments, and small-viscosity asymptotics, for the analysis of symmetry-quotiented learning dynamics. Third, it provides the precise entry point to the Burgers-type description developed later: once a one-dimensional collective coordinate closes the reduced dynamics, the gradient of $u^\nu$ inherits a viscous transport structure whose steepening and possible shock-layer formation can be analyzed quantitatively.

\subsection{Numerical illustration: Hopf--Cole shock layer in a quotient ReLU model}
\label{subsec:relu_hopf_cole_experiment}

We include a small numerical experiment to verify that the quotient
Hopf--Cole quantity introduced above produces a shock-like transition layer in
an explicitly symmetry-reduced ReLU model.  The purpose of the experiment is
not to optimize predictive performance on a benchmark dataset, but to test the
geometric mechanism in a setting where the quotient coordinates and the
quotient Laplace--Beltrami operator can be computed directly.

\paragraph{Model and quotient coordinates.}
We use a one-hidden-layer ReLU network
$    f(x)
    =
    \sum_{j=1}^{m} a_j \sigma(u_j^\top \widetilde{x}) + c,
    \widetilde{x}=(x,1),
    \sigma(z)=\max\{z,0\},
$
with input dimension one and hidden width \(m=2\).  The ReLU positive
rescaling symmetry
$
    (a_j,u_j) \sim (a_j/r_j,r_j u_j),
    r_j>0,
$
is fixed by the balanced representative
$
    |a_j|=\|u_j\|.
$
Equivalently, the quotient coordinates are
$
    \gamma_j = a_j\|u_j\|,
    s_j = \frac{u_j}{\|u_j\|}\in S^1,
    c\in\mathbb{R}.
$
In these coordinates the quotient network is written as
$
    f_Q(x;\gamma,s,c)
    =
    \sum_{j=1}^{m}\gamma_j\sigma(s_j^\top \widetilde{x})+c.
$
For the binary classification loss \(U\), we use the empirical binary
cross-entropy
\begin{equation}
    U(\gamma,s,c)
    =
    \frac{1}{N}\sum_{i=1}^{N}
    \ell_{\mathrm{BCE}}\bigl(f_Q(x_i;\gamma,s,c),y_i\bigr).
\end{equation}

\paragraph{Experimental setting.}
The dataset is a deterministic one-dimensional step classification problem.
We sample \(N=96\) points in \([-2,2]\), add small Gaussian jitter with
standard deviation \(0.035\), and assign labels
$y_i = \mathbf{1}_{\{x_i>0\}}.$
The balanced gauge-fixed ReLU model is trained for \(2000\) epochs using SGD
with learning rate \(0.035\).  After training, the final loss is
\(2.8620\times 10^{-2}\), the training accuracy is \(1.0000\), and the maximum
balanced-gauge error is \(2.384\times 10^{-7}\) (the balanced gauge remains fixed).

We then evaluate the quotient Hopf--Cole profile on a one-dimensional slice of
the learned quotient coordinates.  Specifically, the first spherical direction
\(s_1\in S^1\) is rotated by an angle \(\theta\in[-1.35,1.35]\), while all
other quotient coordinates are held fixed.  On this slice we compute
\begin{equation}
    w(\tau,\theta)
    =
    \exp\!\bigl(\nu\tau\Delta_Q\bigr)
    \exp\!\left(-\frac{U(\theta)}{\nu}\right),
    \qquad
    u^\nu(\tau,\theta)
    =
    -\nu\log w(\tau,\theta),
\end{equation}
where \(\Delta_Q\) is the quotient Laplace--Beltrami operator associated with
the balanced quotient metric.  Numerically, the heat semigroup is approximated
to first order,
\begin{equation}
    w(\tau,\theta)
    \approx
    \phi^\nu(\theta)
    +
    \nu\tau \Delta_Q\phi^\nu(\theta),
    \qquad
    \phi^\nu(\theta)=\exp\!\left(-\frac{U(\theta)}{\nu}\right).
\end{equation}
We use viscosities
$
    \nu\in\{0.05,0.1,0.5\}
$
and heat-flow times
$
    \tau\in\{0,0.006,0.012,0.05,0.1\}.
$
For each pair \((\nu,\tau)\), we plot \(u^\nu\), its first derivative
\(\partial_\theta u^\nu\), and its second derivative
\(\partial_\theta^2 u^\nu\).  Derivatives are computed by finite differences
on a uniform grid of \(91\) points in \(\theta\).

\begin{figure}[t]
    \centering
    \includegraphics[width=0.9\textwidth]{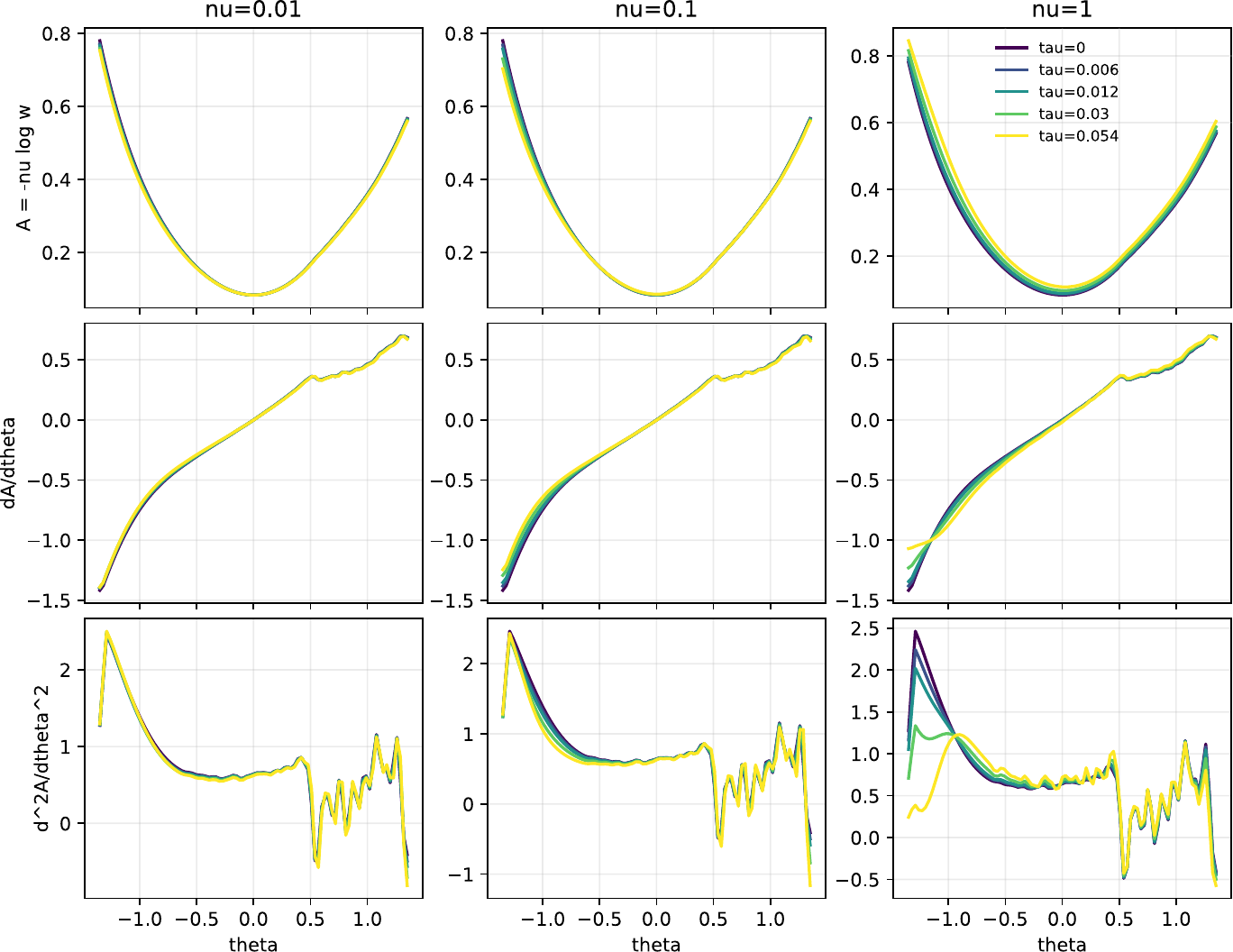}
    \caption{\textbf{Hopf--Cole shock profile in a quotient ReLU model.}
    The columns correspond to \(\nu=0.01,0.1,1.0\), and the curves in each
    panel correspond to increasing heat-flow times
    \(\tau\in\{0,0.006,0.012,0.030,0.054\}\).  The top row shows the effective
    action \(A^\nu=-\nu\log w^\nu\), the middle row shows the quotient-slice
    velocity \(\partial_\theta A^\nu\), and the bottom row shows
    \(\partial_\theta^2 A^\nu\).  Here \(\theta\) is the quotient-slice
    coordinate obtained by rotating the learned first hidden-unit direction
    \(s_1^\ast\in S^1\) as
    \(s_1(\theta)=\cos\theta\,s_1^\ast+\sin\theta\,J s_1^\ast\), with all
    other quotient coordinates fixed.  Smaller viscosity produces a stable,
    sharper transition layer, whereas larger viscosity diffuses the profile.
    The underlying network is a width-\(2\), one-hidden-layer balanced
    gauge-fixed ReLU classifier trained on the one-dimensional task
    \(y=\mathbf{1}_{\{x>0\}}\) with \(96\) samples, plain SGD, learning rate
    \(0.035\), and \(2000\) epochs.}
    \label{fig:relu_hopf_cole_shock_profile}
\end{figure}

\begin{table}[t]
    \centering
    \caption{Maximum absolute second derivative at the final heat-flow time
    \(\tau=0.1\).  The quantity
    \(\max_\theta|\partial_\theta^2 u^\nu|\) measures the sharpness of the
    transition layer in the quotient slice.}
    \label{tab:relu_hopf_cole_curvature}
    \begin{tabular}{c c c}
        \hline
        \(\nu\) & \(\max_\theta|\partial_\theta^2 u^\nu|\) & \(\min_\theta w\) \\
        \hline
        \(0.05\) & \(4.663319\) & \(1.001520\times 10^{-6}\) \\
        \(0.10\) & \(4.412071\) & \(1.456738\times 10^{-3}\) \\
        \(0.50\) & \(3.328405\) & \(2.831267\times 10^{-1}\) \\
        \hline
    \end{tabular}
\end{table}

Figure~\ref{fig:relu_hopf_cole_shock_profile} shows the qualitative behavior
predicted by the quotient Hamilton--Jacobi and Burgers picture.  The local entropy
\(u^\nu=-\nu\log w\) develops a localized steep transition along
the quotient slice, and the corresponding velocity
\(\partial_\theta u^\nu\) rapidly changes over a narrow interval of
\(\theta\).  The second derivative \(\partial_\theta^2 u^\nu\) makes this
transition layer explicit: it concentrates near the location where the
velocity steepens.  As the viscosity increases from \(\nu=0.05\) to
\(\nu=0.5\), the peak curvature decreases from \(4.663319\) to \(3.328405\),
which is consistent with viscous smoothing of a shock layer.

This experiment therefore provides a concrete finite-dimensional illustration
of the theoretical mechanism.  After quotienting the ReLU rescaling symmetry,
the Hopf--Cole transform on the quotient space produces a scalar local entropy
whose gradient exhibits Burgers-type steepening.  The observed layer is
not a discontinuity in raw parameter space; it is a sharp transition in a
symmetry-corrected quotient coordinate, exactly the level at which the theory
predicts shock-like behavior.

\section{Burgers-type Equation} \label{sec:BE} 
We now ask when the quotient Hamilton--Jacobi equation reduces to a scalar transport equation.

\begin{assumption}[One-dimensional closure with isoparametric condition]\label{ass:closure}
Let $\psi : M \to I \subset \R$ be a $C^3$ collective coordinate and let $N \subset M$ be a tubular neighborhood.\footnote{
    A tubular neighborhood of a submanifold is a neighborhood that is diffeomorphic to a neighborhood of the zero section in its normal bundle, so that nearby points are represented by normal displacements from the submanifold.
} Assume:
\begin{enumerate}[label=(\alph*)]
    \item there exists the reduced effective potential $\bar U : I \to \R$ such that $U(q) = \bar U(\psi(q))$ for all $q \in N$;
    \item the coarse-grained solution preserves this dependence on $N$, namely,
$u^\nu(\tau,q) = \bar u^\nu(\tau,\psi(q))$
    for all $(\tau,q) \in [0,T] \times N$;
    \item $\psi$ is unit speed on $N$:
$\normg{\grad \psi} = 1;$
    \item $\Delta_M \psi$ depends only on $\psi$ on $N$, that is, there exists $\kappa \in C^1(I)$ such that
$\Delta_M \psi = \kappa \circ \psi$
    on $N$;
    \item $\bar u^\nu \in C^{1,3}([0,T] \times I)$.
\end{enumerate}
\end{assumption}

\begin{theorem}[Source-corrected one-dimensional reduction]\label{thm:burgers_reduction}
    Under Assumption~\ref{ass:closure}, $\bar u^\nu$ satisfies
    $\partial_\tau \bar u^\nu + \frac{1}{2} (\partial_s \bar u^\nu)^2 = \frac{\nu}{2} \bigl(\partial_{ss} \bar u^\nu + \kappa(s) \partial_s \bar u^\nu\bigr),$
    where $s = \psi(q)$. Consequently, the gradient field
    $v^\nu(\tau,s) := \partial_s \bar u^\nu(\tau,s)$
    satisfies
    $\partial_\tau v^\nu + v^\nu \partial_s v^\nu = \frac{\nu}{2} \bigl(\partial_{ss} v^\nu + \kappa(s) \partial_s v^\nu + \kappa'(s) v^\nu\bigr).$
    If, in addition, $\kappa \equiv 0$, then the equation reduces to the classical viscous Burgers equation
    $\partial_\tau v^\nu + v^\nu \partial_s v^\nu = \frac{\nu}{2} \partial_{ss} v^\nu.$
\end{theorem}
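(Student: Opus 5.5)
The plan is to insert the ansatz $u^\nu(\tau,q)=\bar u^\nu(\tau,\psi(q))$ from Assumption~\ref{ass:closure}(b) directly into the quotient Hamilton--Jacobi equation of Thm.~\ref{thm:HJ}. This equation holds pointwise on $(0,T]\times N$. Each term then gets rewritten through the chain rule on the Riemannian manifold $(M,g)$.

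First, $\partial_\tau u^\nu = (\partial_\tau \bar u^\nu)\circ\psi$. Second, $\grad u^\nu = (\partial_s \bar u^\nu)\circ\psi\,\grad\psi$, so by the unit-speed condition (c) we get $\normg{\grad u^\nu}^2 = (\partial_s\bar u^\nu)^2\circ\psi$. Third, for the Laplacian I will use the composition identity
\[
\Delta_M (f\circ\psi) = f''(\psi)\normg{\grad\psi}^2 + f'(\psi)\Delta_M\psi ,
\]
which follows from $\diver_g(h X) = \langle \grad h, X\rangle_g + h\,\diver_g X$. Applying it with $f=\bar u^\nu(\tau,\cdot)$ and using (c) and (d) gives $\Delta_M u^\nu = \bigl(\partial_{ss}\bar u^\nu + \kappa\,\partial_s\bar u^\nu\bigr)\circ\psi$.

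Substituting these three identities, the HJ equation becomes an identity between functions of $\psi(q)$ on $N$. To pass to an equation in $s\in I$, I need every value $s$ in the relevant interval to be attained by $\psi$ on $N$. This holds because $\grad\psi\neq 0$, so $\psi$ is a submersion and $\psi(N)$ is open. I will take $I=\psi(N)$, or restrict to it, noting this step explicitly. This yields the one-dimensional source-corrected HJ equation for $\bar u^\nu$. The initial condition $\bar u^\nu(0,\cdot)=\bar U$ follows from (a) and Thm.~\ref{thm:HJ}.

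Next, I differentiate the reduced equation in $s$. This is legitimate because (e) gives $\bar u^\nu\in C^{1,3}$, so $v^\nu=\partial_s\bar u^\nu$ is $C^2$ in $s$. For the mixed derivative $\partial_s\partial_\tau\bar u^\nu=\partial_\tau v^\nu$, I will either read the $C^{1,3}$ class as including continuity of $\partial_\tau\partial_s\bar u^\nu$, or argue from the equation itself. The latter works because $\partial_\tau\bar u^\nu$ equals a combination of $s$-derivatives of order at most two, which is $C^1$ in $s$, and then Schwarz's theorem applies. The terms are then straightforward:
\begin{itemize}
\item $\partial_s\frac12(\partial_s\bar u^\nu)^2 = v^\nu\partial_s v^\nu$;
\item $\partial_s(\partial_{ss}\bar u^\nu)=\partial_{ss}v^\nu$;
\item $\partial_s(\kappa v^\nu)=\kappa\partial_s v^\nu+\kappa' v^\nu$, using $\kappa\in C^1$.
\end{itemize}
Together these give the stated transport equation. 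Setting $\kappa\equiv0$ then gives viscous Burgers immediately.

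The main obstacle is not computational but one of justification. The first issue is making sure the Laplacian composition formula and the reduction from ``identity on $N$'' to ``identity on $I$'' are rigorous, which is where surjectivity of $\psi$ onto $I$ matters. The second is the regularity needed to commute $\partial_s$ and $\partial_\tau$. I would handle the first by restricting $I$ to $\psi(N)$, and the second via the equation-based argument above.
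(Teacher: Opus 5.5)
Your proposal is correct and follows the same route as the paper's proof: substitute $u^\nu=\bar u^\nu(\tau,\psi(q))$ into the quotient Hamilton--Jacobi equation, apply the chain rule using $\normg{\grad\psi}=1$ and $\Delta_M\psi=\kappa\circ\psi$, then differentiate the reduced equation in $s$. Your two extra steps are details the paper skips when it simply cites $C^{1,3}$ regularity, and both are valid: restricting to the open set $\psi(N)$, and justifying $\partial_\tau\partial_s\bar u^\nu=\partial_s\partial_\tau\bar u^\nu$ from the equation via Schwarz's theorem.
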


\begin{proof}
    Substitute $u^\nu(\tau,q) = \bar u^\nu(\tau,\psi(q))$ into the Hamilton--Jacobi equation. By the chain rule,
    $\grad u^\nu = (\partial_s \bar u^\nu) \grad \psi$, and $\normg{\grad u^\nu}^2 = (\partial_s \bar u^\nu)^2 \normg{\grad \psi}^2 = (\partial_s \bar u^\nu)^2.$
    For the Laplacian,
    $
    \Delta_M u^\nu
    = \partial_{ss} \bar u^\nu \, \normg{\grad \psi}^2 + \partial_s \bar u^\nu \, \Delta_M \psi \\
    = \partial_{ss} \bar u^\nu + \kappa(\psi(q)) \partial_s \bar u^\nu.
    $
    Since the right-hand side depends on $q$ only through $s = \psi(q)$, the scalar equation follows. Differentiating it w.r.t. $s$ is justified by $\bar u^\nu \in C^{1,3}$ and gives the evolution for $v^\nu$. If $\kappa \equiv 0$, the source terms vanish. \qed
\end{proof}


\begin{figure}[htbp]
    \centering
    \includegraphics[width=\textwidth]{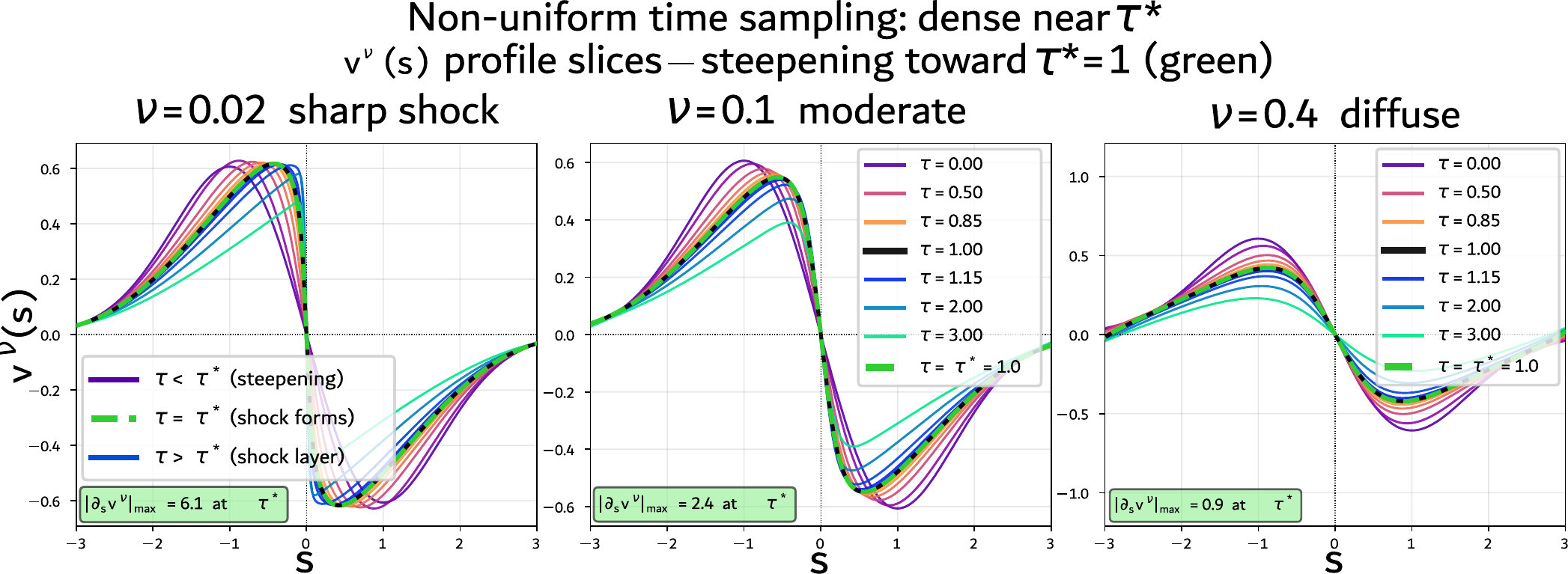}
    \caption{
    \textbf{Shock formation in viscous Burgers equation via Hopf--Cole transform.}
    Each panel shows the velocity field $v^\nu(\tau, s)$, obtained from the Hopf--Cole solution $v^\nu = \partial_s(-\nu \log w)$ with potential $U_0(s) = e^{-s^2/2}$, for three viscosities $\nu \in \{ 0.02, 0.10, 0.40 \}$ (left to right). The initial drift is  $v_0(s) = U'_0 (s) = -se^{-s^2/2}$, whose second derivative satisfies $\inf_s U''_0 (s) = -1$ at $s=0$. Time samples are non-uniformly spaced with higher density near $\tau^*$ to resolve the rapid transition. As $\nu \rightarrow 0$, the shock layer sharpens into a discontinuity, while large $\nu$ smooths the front and suppresses shock formation entirely.
    Note that this is a numerical integration (simulation) of the viscous Burgers equation.
    }
    \label{fig:ShockFormation}
\end{figure}
Thm.~\ref{thm:burgers_reduction} shows that, \textit{once the quotient coarse-grained local-entropy dynamics close through a single collective coordinate $s=\psi(q)$, the symmetry-quotiented
Hamilton--Jacobi equation collapses to a scalar nonlinear evolution, and its gradient field becomes a Burgers-type equation.} Thus, the effective training dynamics admit a genuine transport normal form: the gradient field $v^\nu=\partial_s \bar u^\nu$ evolves by nonlinear self-advection, while diffusion and geometric forcing appear as lower-order corrections.

The benefit of the theorem is that it converts a high-dimensional quotient dynamics on $M$ into a one-dimensional local-entropy dynamics whose singular behavior is mathematically explicit. In particular, the term $v^\nu \partial_s v^\nu$ identifies the steepening mechanism responsible for shock-type regime changes, whereas the terms involving $\nu$ describe viscous smoothing and the coefficient $\kappa$ records the geometric effect of the embedding of the collective coordinate inside the quotient manifold. Therefore, abrupt transitions in the reduced learning dynamics can be analyzed by standard tools from Burgers theory rather than only by abstract arguments on the original parameter space.

\subsection{Shock Formation in Inviscid Limit}
In the flat case $\kappa \equiv 0$, the theorem yields the classical viscous Burgers equation exactly, so the entire classical theory of shock layers, inviscid limits, characteristic crossing, and viscosity regularization becomes available.
When $\kappa \equiv 0$ and $\nu \to 0$, the viscous Burgers equation formally becomes inviscid Burgers,
$\partial_\tau v + v \partial_s v = 0,$ and $v(0,s) = \bar U'(s).$

\begin{theorem}[Shock time for the reduced drift]\label{thm:shock_time}
    Let $I$ be either $\R$ or the one-dimensional torus $\T$. Assume $\bar U \in C^2(I)$ and define $v_0(s) := \bar U'(s)$. 
    Then, the classical solution of the inviscid Burgers equation, as long as it exists classically, is constant along characteristics\footnote{
        The \textit{characteristics} here refer to the solution curves $s(\tau;\xi)$ of the characteristic equation $\frac{d}{d\tau}s(\tau;\xi)=v(\tau,s(\tau;\xi))$ with $s(0;\xi)=\xi$; along each such curve, $v$ remains constant, so that for the inviscid Burgers equation, one has $s(\tau;\xi)=\xi+\tau v_0(\xi)$.
    }:
    $s(\tau;\xi) = \xi + \tau v_0(\xi), v\bigl(\tau,s(\tau;\xi)\bigr) = v_0(\xi).$
    Its maximal classical existence time is
        $\tau_* =
        -\dfrac{1}{\inf_{\xi \in I} v_0'(\xi)}$ for $\inf_{\xi \in I} v_0'(\xi) < 0$, and
        $+\infty$  for $\inf_{\xi \in I} v_0'(\xi) \ge 0$,
    and equivalently,
        $\tau_* = -\dfrac{1}{\inf_{\xi \in I} \bar U''(\xi)}$ for $\inf_{\xi \in I} \bar U''(\xi) < 0$, and
        $+\infty$ for $\inf_{\xi \in I} \bar U''(\xi) \ge 0$.
\end{theorem}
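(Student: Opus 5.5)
The proof follows the classical method of characteristics for the inviscid Burgers equation, in three steps: constancy along characteristics, construction of the classical solution up to $\tau_*$, and impossibility of a classical extension past $\tau_*$.

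\textbf{Step 1: constancy along characteristics.} Suppose $v$ is a $C^1$ solution on $[0,T)\times I$. Solve $\frac{d}{d\tau}s(\tau;\xi)=v(\tau,s(\tau;\xi))$ with $s(0;\xi)=\xi$. The chain rule gives $\frac{d}{d\tau}v(\tau,s(\tau;\xi)) = \partial_\tau v + v\,\partial_s v = 0$. Hence $v$ is constant, equal to $v_0(\xi)$, along each characteristic. The characteristic velocity is therefore constant, so $s(\tau;\xi)=\xi+\tau v_0(\xi)$.

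\textbf{Step 2: existence up to $\tau_*$.} Set $m:=\inf_\xi v_0'(\xi)=\inf_\xi \bar U''(\xi)$ and fix $\tau<\tau_*$. Then $\partial_\xi s(\tau;\xi) = 1+\tau v_0'(\xi)\ge 1+\tau m>0$, with the bound equal to $1$ if $m\ge 0$. So $\xi\mapsto s(\tau;\xi)$ is $C^1$ and strictly increasing with derivative bounded below.

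On $\R$ it is surjective, since $s(\tau;\xi)-\xi$ stays bounded when $v_0$ is bounded; if $v_0$ is unbounded, the lower derivative bound $1+\tau m>0$ alone already forces $s\to\pm\infty$. On $\T$, $\xi\mapsto \xi+\tau v_0(\xi)$ is a degree-one lift with positive derivative, hence a diffeomorphism of the circle.

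The inverse function theorem then gives a $C^1$ inverse $\xi(\tau,s)$, and I define $v(\tau,s):=v_0(\xi(\tau,s))$. Differentiating $s=\xi+\tau v_0(\xi)$ gives
\[
\partial_s\xi=\frac{1}{1+\tau v_0'(\xi)},\qquad \partial_\tau\xi=-\frac{v_0(\xi)}{1+\tau v_0'(\xi)}.
\]
It follows that $\partial_\tau v+v\,\partial_s v = v_0'(\xi)\,(\partial_\tau\xi+v_0(\xi)\,\partial_s\xi)=0$. This step uses $v_0\in C^1$, which is exactly $\bar U\in C^2$. The same formula gives
\[
\partial_s v(\tau,s)=\frac{v_0'(\xi)}{1+\tau v_0'(\xi)}.
\]

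\textbf{Step 3: no classical solution beyond $\tau_*$.} This step needs care, and I expect it to be the main obstacle. Assume $m<0$ and take $\xi_k$ with $v_0'(\xi_k)\to m$. Any classical solution must agree with the one constructed in Step 2, because Step 1 shows every classical solution is given by characteristics. Along the characteristic from $\xi_k$, the expression for $\partial_s v$ yields $|\partial_s v|\to\infty$ as $\tau\uparrow -1/v_0'(\xi_k)$. Letting $k\to\infty$, $\sup_s|\partial_s v(\tau,\cdot)|\to\infty$ as $\tau\uparrow\tau_*$, so $v$ cannot remain $C^1$ on $[0,\tau_*]$.

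If the infimum is attained at some $\xi_0$, then $\partial_\xi s(\tau_*;\xi_0)=0$. In that case, or alternatively by the mean value theorem applied to two nearby points $\xi_1<\xi_2$ with $v_0(\xi_1)>v_0(\xi_2)$, characteristics cross just after $\tau_*$. This contradicts single-valuedness of a classical solution beyond $\tau_*$.

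The subtle case is $\R$ with a non-attained infimum. There I would rely on the derivative blow-up along the minimizing sequence, which suffices because $\partial_s v$ is required to be locally bounded up to any time at which the solution is still classical.

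The equivalence of the two formulas for $\tau_*$ is immediate from $v_0'=\bar U''$.
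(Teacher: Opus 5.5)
Your route is the paper's: constancy along characteristics, $s(\tau;\xi)=\xi+\tau v_0(\xi)$, and $\partial_\xi s=1+\tau v_0'(\xi)$. Your Steps 1--2 supply the surjectivity and inverse-function details that the paper compresses into ``a classical solution persists exactly as long as the characteristic map remains a $C^1$ diffeomorphism''.

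The problem is Step 3, in the case you flag as subtle ($I=\R$, infimum not attained). There the minimizing sequence necessarily has $|\xi_k|\to\infty$. A blow-up of $\sup_s|\partial_s v(\tau,\cdot)|$ as $\tau\uparrow\tau_*$ driven by such a sequence need not contradict continuity, hence local boundedness, of $\partial_s v$, so your appeal to local boundedness proves nothing. In fact your intermediate claim that $v$ cannot be $C^1$ on $[0,\tau_*]$ is false in general. Take $v_0'(\xi)=-1+(1+\xi^2)^{-1/2}$, so $\tau_*=1$. Then $1+\tau v_0'(\xi)=(1-\tau)+\tau(1+\xi^2)^{-1/2}>0$ for all $\tau\in[0,1]$, and $s(\tau;\xi)-s(\tau;0)\to\pm\infty$ as $\xi\to\pm\infty$ even at $\tau=1$. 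Hence the characteristic map is a $C^1$ diffeomorphism of $\R$ up to and including $\tau_*$, and $v\in C^1([0,1]\times\R)$. Its derivative $\partial_s v(1,s)=1-(1+\xi^2)^{1/2}$, with $\xi=\xi(1,s)$, is finite everywhere but unbounded.

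The repair is the per-characteristic statement you wrote just before letting $k\to\infty$, made rigorous at a fixed finite point. If $v$ is classical on $[0,T)\times I$, Step 1 gives $v(\tau,\xi+\tau v_0(\xi))=v_0(\xi)$ for all $\xi\in I$ and $\tau<T$. Differentiating in $\xi$ gives $\partial_s v(\tau,s(\tau;\xi))\,(1+\tau v_0'(\xi))=v_0'(\xi)$ for \emph{every} classical solution. This identity must come from Step 1, not from ``agreement with Step 2'', because the Step 2 solution is undefined for $\tau\ge\tau_*$.

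Now suppose $T>\tau_*$. Choose $k$ with $\tau_k:=-1/v_0'(\xi_k)<T$, which is possible because $\tau_k\to\tau_*$. At the point $(\tau_k,s(\tau_k;\xi_k))$ of the domain the left side is $0$ while the right side is $v_0'(\xi_k)\neq 0$, a contradiction. Your mean-value crossing argument also works without attainment: two characteristics issued from a small neighbourhood of $\xi_k$ meet at a time $-1/v_0'(\zeta)<T$.

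The correct conclusion is therefore ``no classical solution on $[0,T)$ for any $T>\tau_*$'', which is what maximal existence time $\tau_*$ means. Whether the solution reaches $\tau_*$ itself depends on $v_0$.
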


\begin{proof}
    The characteristic system is
    $\frac{d}{d\tau} s(\tau) = v(\tau,s(\tau)), \qquad \frac{d}{d\tau} v(\tau,s(\tau)) = 0.$
    Hence $v(\tau,s(\tau;\xi)) = v_0(\xi)$ and therefore
    $s(\tau;\xi) = \xi + \tau v_0(\xi).$
    A classical solution persists exactly as long as the characteristic map $\xi \mapsto s(\tau;\xi)$ remains a $C^1$ diffeomorphism of $I$ onto itself. Differentiating with respect to $\xi$ gives
    $\partial_\xi s(\tau;\xi) = 1 + \tau v_0'(\xi).$
    The first loss of invertibility therefore occurs at the time stated above. Since $v_0' = \bar U''$, the second formula follows. \qed
\end{proof}

Thm.~\ref{thm:shock_time} states that the inviscid gradient field $v$ remains classical only up to the shock time $\tau^* = -1/\inf_{\xi \in I}\bar U''(\xi)$ when $\inf_{\xi \in I}\bar U''(\xi) < 0$, so negative curvature of the reduced effective potential causes finite-time characteristic crossing; Fig.~\ref{fig:ShockFormation} illustrates the corresponding viscous precursor, where the shock layer sharpens as $\nu \to 0$.

\section{Architecture-specific Instantiations} \label{sec:Architecture}
\subsection{ReLU Multilayer Perceptrons}
Consider a bias-free ReLU multilayer perceptron (MLP)
\begin{equation}
    f_W(x) = W_L \phi(W_{L-1} \phi(\cdots \phi(W_1 x) \cdots)),    
\end{equation}
where the ReLU activation $\phi(z) = \max\{z,0\}$ acts coordinatewise.

\begin{proposition}[Positive rescaling symmetry for ReLU MLPs]
Fix an internal hidden unit $j$ in a ReLU MLP. If the incoming weights attached to that unit are multiplied by a constant $c > 0$ and the outgoing weights from that unit are multiplied by $c^{-1}$, then the realized function $f_W$ is unchanged. Hidden-unit permutations are also exact symmetries provided the inverse permutation is applied in the subsequent layer \cite{Neyshabur2015,Rangamani2019}.
\end{proposition}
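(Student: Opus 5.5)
The argument is a direct computation. It uses the positive homogeneity of ReLU, $\phi(cz) = c\,\phi(z)$ for $c>0$, together with the fact that a coordinatewise activation commutes with permutation matrices, $\phi(Pz) = P\phi(z)$. I will write the network layer by layer: $h_0 = x$, $h_\ell = \phi(W_\ell h_{\ell-1})$ for $1 \le \ell \le L-1$, and $f_W(x) = W_L h_{L-1}$. In both claims I will then show that the modified hidden layer differs from the original by a known invertible factor, and that this factor is cancelled exactly by the next weight matrix.

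\textbf{Rescaling.} Let the unit $j$ sit in hidden layer $\ell$ with $1 \le \ell \le L-1$. Multiplying its incoming weights by $c$ amounts to replacing $W_\ell$ by $D W_\ell$, where $D = \mathrm{diag}(1,\dots,c,\dots,1)$ has $c$ in position $j$. Dividing its outgoing weights by $c$ amounts to replacing $W_{\ell+1}$ by $W_{\ell+1} D^{-1}$.

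Because $D$ is diagonal with positive entries and $\phi$ acts coordinatewise and is positively homogeneous, we have $\phi(D z) = D\phi(z)$ for every $z$. The new layer output is therefore $\tilde h_\ell = \phi(D W_\ell h_{\ell-1}) = D h_\ell$. The next pre-activation is then $W_{\ell+1} D^{-1} D h_\ell = W_{\ell+1} h_\ell$, which is unchanged. All later layers are untouched, so $f_W$ is unchanged.

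The only point needing care is the case $\ell+1 = L$, where no activation follows. This is harmless, because the cancellation $D^{-1}D = \Id$ happens before any activation is applied. With biases one would also scale the bias of unit $j$ by $c$, but the network here is bias-free.

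\textbf{Permutation.} Let $P$ be a permutation matrix acting on layer $\ell$. Replace $W_\ell$ by $P W_\ell$ and $W_{\ell+1}$ by $W_{\ell+1} P^{-1}$. Since $\phi(Pz) = P\phi(z)$, the new layer output is $\tilde h_\ell = P h_\ell$, and the next layer sees $W_{\ell+1} P^{-1} P h_\ell = W_{\ell+1} h_\ell$. This proves the second claim.

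Neither claim has a real obstacle. The only step that needs checking is the commutation identity $\phi(D z) = D\phi(z)$. It holds because $\max\{c z_j, 0\} = c\max\{z_j, 0\}$ whenever $c > 0$, and it fails for $c < 0$, which is why the rescaling factor must be positive.

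A brief remark can note that both operations generate a group action on the parameter space. On the regular stratum where no hidden unit has zero weights, this action is free, which is the setting of Assumption~\ref{ass:projectability}.
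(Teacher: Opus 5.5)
Your computation is correct and is the standard positive-homogeneity argument; the paper gives no proof of its own and simply defers to \cite{Neyshabur2015,Rangamani2019}, so your write-up supplies the verification the paper omits. The only inaccuracy is in your closing remark, which the proposition does not need: having no zero-weight hidden units makes the rescaling action free, but it does not make the group generated together with permutations free. Two units in the same layer whose incoming rows and outgoing columns differ by factors $c$ and $c^{-1}$ are fixed by a transposition composed with a rescaling, which is why Corollary~\ref{cor:relu} assumes a trivial stabilizer as a separate hypothesis.
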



\begin{corollary}[Conditional ReLU MLP quotient dynamics]\label{cor:relu}
On any regular stratum where the activation pattern is fixed\footnote{
    ``the activation pattern is fixed'' means that on the local stratum under consideration, the active or inactive status of every ReLU unit does not change, so the network map $f_W$ is smooth in the weights on that stratum.
} and the stabilizer is trivial\footnote{
    The stabilizer of a parameter point is the subgroup of symmetries that leaves that point unchanged; saying that the stabilizer is trivial means that only the identity symmetry fixes the point, so the action is free on that stratum. If the stabilizer were nontrivial, the point would retain residual symmetry, which could lower the orbit dimension and induce local singularities in the quotient.
}, the symmetry action generated by positive rescalings and permutations is exact. 
Consequently, if the quotiented SGD drift and quotiented conditional covariance are projectable\footnote{
    Here, ``projected'' means pushed forward to the quotient coordinates via the quotient-chart Jacobian $D\Phi$, whereas ``projectable'' means that the resulting quantity depends only on the quotient state and not on the particular representative in the symmetry orbit. In particular, mere projection is not enough: the projected quantity must agree for all representatives of the same orbit in order to well-define a closed reduced dynamics on the quotient. In this paper, the projected drift is $D\Phi(\theta)\nabla L(\theta)$, while projectability means that this projected quantity is constant along symmetry orbits and therefore can be written as a well-defined function $b(Y)$ of the quotient coordinate $Y=\Phi(\theta)$.
} through the quotient chart in the sense of Assumption~\ref{ass:projectability}, then, Thm.~\ref{thm:quotient_sgd} applies. 
If, in addition, the assumptions of Thm.~\ref{thm:HJ} hold on the quotient, and the local-entropy dynamics satisfy Assumption~\ref{ass:closure}, then, the gradient field obeys the source-corrected Burgers equation, and in the flat-coordinate case it obeys classical viscous Burgers.
\end{corollary}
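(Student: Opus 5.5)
The argument is a chain of verifications. The only genuinely new content concerns the ReLU group action; everything after that is an application of Thm.~\ref{thm:quotient_sgd}, Thm.~\ref{thm:HJ}, and Thm.~\ref{thm:burgers_reduction}.

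\emph{Step 1: exact, smooth, invariant action on the stratum.} First I will show that on a stratum $\Theta_{\reg}$ with fixed activation pattern the group $G$ acts smoothly, freely and properly, and that $L$ is $G$-invariant. I take $G$ to be generated by the diagonal rescalings $(\R_{>0})^{H}$, with $H$ the number of hidden units, together with the layerwise permutations.

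\begin{enumerate}[label=(\roman*)]
    \item \emph{Smoothness.} Each rescaling acts linearly on the weights, so the action is smooth.
    \item \emph{The stratum is preserved.} A rescaling multiplies the preactivation of unit $j$ by $c_j>0$, so the sign pattern is unchanged. A permutation relabels units consistently, so the stratum maps to itself (up to relabeling of the pattern).
    \item \emph{Exactness.} By the preceding Proposition, the realized function is unchanged, because $\phi(cz)=c\phi(z)$ for $c>0$. Hence the empirical loss is invariant, $L(g\cdot\theta)=L(\theta)$.
    \item \emph{Freeness.} This is exactly the hypothesis that the stabilizer is trivial.
    \item \emph{Properness.} The finite permutation part is automatically proper. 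For the rescaling part, a coordinate such as $\log\|w^{\mathrm{in}}_j\|-\log\|w^{\mathrm{out}}_j\|$ shifts by $2\log c_j$. Hence if $c^{(k)}\to\infty$ then $g_k\theta_k$ leaves every compact set, which gives properness.
\end{enumerate}

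Consequently $M=\Theta_{\reg}/G$ is a manifold, $\pi$ is a submersion, and $U$ with $L=U\circ\pi$ exists. This places us in the setting of Section~\ref{sec:Preliminaries}.

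\emph{Step 2: the conditional chain.} Once the geometric setting is in place, I apply the three earlier theorems in turn.

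\begin{enumerate}[label=(\roman*)]
    \item Under the assumed projectability (Assumption~\ref{ass:projectability}) and the standing moment bound on $M_{n+1}$, Thm.~\ref{thm:quotient_sgd} applies verbatim and gives the reduced recursion.
    \item Assuming the hypotheses of Thm.~\ref{thm:HJ} on $(M,g)$, with $g$ for instance the quotient metric, the local entropy $u^\nu$ solves the quotient viscous Hamilton--Jacobi equation.
    \item Assumption~\ref{ass:closure} then feeds into Thm.~\ref{thm:burgers_reduction}. This yields the source-corrected equation for $v^\nu=\partial_s\bar u^\nu$, and the classical viscous Burgers equation when $\kappa\equiv0$.
\end{enumerate}

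The main point needing care is Step 1: on a fixed activation pattern, $f_W$ must be smooth in $W$ and the stratum must be genuinely $G$-stable. Otherwise the chart $\Phi$ would fail to be $C^3$, as Assumption~\ref{ass:projectability} requires.

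\begin{enumerate}[label=(\roman*)]
    \item \emph{Smoothness of $f_W$.} On a fixed activation pattern, $f_W$ is a product of masked linear maps, hence polynomial in $W$, so this holds.
    \item \emph{Stability.} The sign-preservation argument in Step 1 establishes that the stratum is $G$-stable.
    \item \emph{Properness.} This is the only delicate point. The rescaling orbits are noncompact, so one must check that orbits stay separated. I would handle it with the explicit invariant coordinates already used in the experiment, $\gamma_j=a_j\|u_j\|$ and $s_j=u_j/\|u_j\|$, generalized layerwise by balancing. These give a global slice for the rescaling group on the stratum, which makes properness immediate.
\end{enumerate}
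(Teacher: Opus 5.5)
Your proposal is correct and follows the paper's own argument: smoothness of $f_W$ on a fixed activation pattern plus exactness of the rescaling/permutation action from the preceding Proposition, then the chain Thm.~\ref{thm:quotient_sgd} $\Rightarrow$ Thm.~\ref{thm:HJ} $\Rightarrow$ Thm.~\ref{thm:burgers_reduction}.

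The only caveats concern your Step~1 extras, which are not needed because Section~\ref{sec:Preliminaries} already assumes the action on a regular stratum is free and proper. First, permutations send a fixed-pattern region to a relabeled one, so you should pass to its $G$-saturation rather than claim $G$-stability. Second, your coordinate $\log\|w^{\mathrm{in}}_j\|-\log\|w^{\mathrm{out}}_j\|$ is defined and shifts by $2\log c_j$ only for one hidden layer with $w^{\mathrm{out}}_j\neq 0$. In deeper nets the incoming row of a unit is also divided by the previous layer's factors. A layer-by-layer induction from the input is what actually gives properness, using only that nonzero preactivations on the data force $w^{\mathrm{in}}_j\neq 0$.
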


\begin{proof}
On a fixed activation-pattern stratum, $f_W$ is smooth in the weights, and the proposition gives an exact symmetry action. The quotient reduction is therefore valid once projectability of the drift and projected covariance is verified. The Hamilton--Jacobi and Burgers conclusions then follow from Theorems~\ref{thm:HJ} and \ref{thm:burgers_reduction}.
\end{proof}

\subsection{Convolutional Neural Networks}
The same mechanism extends to convolutional neural networks (CNNs).

\begin{proposition}[Channel rescaling symmetry for CNNs]
Consider two consecutive convolutional layers separated only by positively homogeneous nonlinearities. If one rescales an intermediate channel by $c > 0$ and rescales the corresponding incoming kernel of the next layer by $c^{-1}$, the network function is unchanged. Channel permutations are also exact symmetries if propagated consistently across adjacent layers \cite{Neyshabur2015,Rangamani2019}.
\end{proposition}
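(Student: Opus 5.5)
The plan is to write the two consecutive convolutional layers in explicit channel form and verify the invariance by a direct computation, using only positive homogeneity and linearity of convolution. Let layer $\ell$ produce pre-activations
\[
z^{(\ell)}_k = \sum_{i} K^{(\ell)}_{k,i} * h^{(\ell-1)}_i + b^{(\ell)}_k ,
\]
with intermediate activations $h^{(\ell)}_k = \phi\bigl(z^{(\ell)}_k\bigr)$, where $\phi$ is positively homogeneous: $\phi(cz)=c\,\phi(z)$ for all $c>0$, applied pointwise. The next layer is
\[
z^{(\ell+1)}_m = \sum_k K^{(\ell+1)}_{m,k} * h^{(\ell)}_k + b^{(\ell+1)}_m .
\]
Fix an intermediate channel $k_0$. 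The rescaling acts on the parameters feeding into that channel. I will multiply every incoming kernel $K^{(\ell)}_{k_0,i}$, and the bias $b^{(\ell)}_{k_0}$ if present, by $c$. I will then multiply every outgoing kernel slice $K^{(\ell+1)}_{m,k_0}$ by $c^{-1}$.

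The first step is to use linearity of convolution to show that $z^{(\ell)}_{k_0}$ becomes $c\,z^{(\ell)}_{k_0}$. Positive homogeneity then gives that $h^{(\ell)}_{k_0}$ becomes $c\,h^{(\ell)}_{k_0}$, pointwise over the spatial grid. Positively homogeneous pooling such as max-pooling or average pooling also commutes with the factor $c$, and I will note this explicitly. Bilinearity of convolution then yields
\[
\bigl(c^{-1}K^{(\ell+1)}_{m,k_0}\bigr) * \bigl(c\,h^{(\ell)}_{k_0}\bigr) = K^{(\ell+1)}_{m,k_0} * h^{(\ell)}_{k_0}
\]
for every output channel $m$. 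The other channels are untouched, so every $z^{(\ell+1)}_m$ is unchanged, and hence so is everything downstream, including the network function.

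For permutations, let $\sigma$ be a permutation of the channels in layer $\ell$. I will relabel the incoming kernels and biases as $K^{(\ell)}_{\sigma(k),i}$ and $b^{(\ell)}_{\sigma(k)}$, and apply the inverse relabeling to the second index of the next layer's kernels, $K^{(\ell+1)}_{m,\sigma(k)}$. The sum over $k$ in $z^{(\ell+1)}_m$ is then merely reindexed, so it is invariant. Pointwise activations commute with channel relabeling, and that is all this step needs.

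The main obstacle is purely about hypotheses. The argument needs strictly positive $c$, because ReLU is not odd. It also needs the rescaling to hit the bias as well as the kernels. Finally, any layer sitting between the two convolutions must be channelwise positively homogeneous. Batch normalization breaks the covariance argument, so I would state that it is excluded, or else absorbed into the invariance. With those caveats the claim reduces to the two-line computation above, exactly parallel to the ReLU MLP proposition.
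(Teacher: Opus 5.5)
Your proof is correct. The paper gives no argument for this proposition and simply defers to \cite{Neyshabur2015,Rangamani2019}, so your direct linearity-plus-positive-homogeneity computation is the standard intended verification. It also usefully makes explicit what the paper's informal statement leaves tacit: the nonlinearity and pooling must act channelwise, the bias of channel $k_0$ must be scaled along with its incoming kernels, and $c>0$. The only hypothesis you still use implicitly is that $h^{(\ell)}_{k_0}$ feeds only into layer $\ell+1$, with no residual or skip branch; your step ``everything downstream is unchanged'' requires this.
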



\begin{corollary}[Conditional CNN quotient dynamics]\label{cor:cnn}
On a regular stratum of a positively homogeneous CNN, the same conditional conclusion holds as in Corollary~\ref{cor:relu}: if the projected SGD drift and projected conditional covariance are projectable through the quotient chart in the sense of Assumption~\ref{ass:projectability}, then SGD descends locally to quotient coordinates. Under the hypotheses of Theorems~\ref{thm:HJ} and \ref{thm:burgers_reduction}, the quotient local-entropy dynamics admit the same source-corrected Burgers reduction as in the MLP case.
\end{corollary}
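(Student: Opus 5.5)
The argument mirrors the proof of Corollary~\ref{cor:relu}, with the CNN channel-rescaling proposition in place of the MLP one. The first step is to set up the regular stratum. The stratum is an open set of kernel parameters on which three things hold: the activation pattern of every positively homogeneous nonlinearity is fixed, no channel vanishes identically, and the stabilizer of the group generated by channel rescalings $(\R_{>0})^{C}$ and channel permutations is trivial. On this stratum the network function is a polynomial in the kernel entries, since each convolution is linear and each fixed-pattern nonlinearity acts as multiplication by a $0/1$ mask. Hence $f$ and the empirical loss $L$ are smooth there.

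The second step is to check that the symmetry is exact, $L(g\cdot\theta)=L(\theta)$. Take an intermediate channel and scale it by $c>0$. Positive homogeneity gives $\phi(cz)=c\,\phi(z)$, so the scaling passes through the nonlinearity unchanged. Convolution is linear, so the compensating factor $c^{-1}$ on the corresponding incoming kernel slice of the next layer cancels it exactly. For permutations, permuting output channels of one layer together with the input channels of the next layer only relabels the terms of a sum. The rescaling action is free and proper because the stabilizer is trivial and the group is a product of multiplicative groups acting by linear scalings away from zero channels. Therefore $M=\Theta_{\reg}/G$ is a manifold, $\pi$ is a submersion, and $L=U\circ\pi$. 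A natural local chart is the balanced gauge (equal norms of the incoming and outgoing slices of each channel), analogous to the ReLU example in Section~\ref{subsec:relu_hopf_cole_experiment}.

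The third step is to chain the earlier results. Assume projectability in the sense of Assumption~\ref{ass:projectability}; Theorem~\ref{thm:quotient_sgd} then gives the local quotient SGD recursion. Under the positivity and regularity hypotheses of Theorem~\ref{thm:HJ} applied to $U$ on $M$, the function $u^\nu$ satisfies the quotient viscous Hamilton--Jacobi equation. Under Assumption~\ref{ass:closure}, Theorem~\ref{thm:burgers_reduction} then yields the source-corrected Burgers equation for $v^\nu=\partial_s\bar u^\nu$, and the classical viscous Burgers equation when $\kappa\equiv0$.

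The main obstacle is the regularity of the stratum. Because of zero padding and stride, a channel's activation pattern involves many spatial positions. One therefore has to argue that the set where no pre-activation equals zero at any training input and any position is open; it is, as a finite intersection of open sets. One also has to show that the chart map $\Phi$ is $C^3$ there. I expect both to follow from finiteness of the data and of the spatial grid.
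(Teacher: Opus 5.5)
Your proposal is correct and takes the paper's route: the paper's proof substitutes the CNN channel-rescaling proposition for the MLP one and then chains Theorem~\ref{thm:quotient_sgd}, Theorem~\ref{thm:HJ} and Theorem~\ref{thm:burgers_reduction} as in Corollary~\ref{cor:relu}, so your extra detail on the stratum and the balanced-gauge chart goes beyond what the paper writes but changes nothing. Two small tidy-ups: a single fixed activation pattern is not preserved by channel permutations, so take $\Theta_{\reg}$ to be the $G$-invariant open set where no pre-activation vanishes (the set you already identify in your last paragraph), and the $C^3$ regularity of $\Phi=\chi\circ\pi$ comes from the quotient manifold theorem, not from finiteness of the data.
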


\begin{proof}
Use the previous proposition in place of the MLP symmetry proposition; the rest is identical.
\end{proof}

\subsection{Networks with Batch Normalization and Layer Normalization}
Batch Normalization creates scale-invariant directions because normalizing a pre-activation removes uniform positive scalings of the incoming weight vector \cite{AroraLiLyu2019}. The exact symmetry group depends on the architecture.

\begin{proposition}[Positive scale invariance before idealized Batch Normalization]
Let $z = w^{\top} x$ be a pre-activation immediately followed by the idealized Batch Normalization map
$\mathrm{BN}(z) = \gamma \frac{z - \mu(z)}{\sigma(z)} + \beta,$
with no $\varepsilon$-stabilizer inside the denominator. If $w$ is replaced by $c w$ with $c > 0$, then the normalized output is unchanged \cite{AroraLiLyu2019}.
\end{proposition}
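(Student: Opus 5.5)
The argument is a direct computation of the batch statistics of the rescaled pre-activation, followed by substitution into the idealized Batch Normalization map. Fix the mini-batch $x^{(1)},\dots,x^{(B)}$ and write $z_i = w^\top x^{(i)}$ for the pre-activations. The statistics are defined batchwise:
\[
\mu(z) = \tfrac{1}{B}\sum_{i} z_i, \qquad \sigma(z) = \Bigl(\tfrac{1}{B}\sum_i (z_i-\mu(z))^2\Bigr)^{1/2}.
\]
We work on the set where $\sigma(z) > 0$. This is the set on which the idealized map is defined at all, since there is no $\varepsilon$-stabilizer; we state this as the standing domain of the proposition.

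Replace $w$ by $cw$ with $c>0$. By linearity the new pre-activations are $z_i' = (cw)^\top x^{(i)} = c z_i$. The mean is linear, so $\mu(cz) = c\,\mu(z)$. The centered values therefore scale as $cz_i - c\mu(z) = c(z_i-\mu(z))$. The variance then picks up a factor $c^2$, and taking the square root with $c>0$ gives $\sigma(cz) = |c|\,\sigma(z) = c\,\sigma(z)$. The positivity of $c$ is used exactly here. Also $\sigma(cz)>0$ whenever $\sigma(z)>0$, so the rescaled point stays in the domain.

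Substituting into the map, the factor $c$ cancels:
\[
\mathrm{BN}(cz) = \gamma\,\frac{c(z-\mu(z))}{c\,\sigma(z)} + \beta = \mathrm{BN}(z).
\]
Since $\gamma$ and $\beta$ are untouched and every downstream computation sees only $\mathrm{BN}(z)$, the output of the network is unchanged. Consequently the loss satisfies $L(c\cdot\theta)=L(\theta)$ for the induced action of $(0,\infty)$ on $w$.

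There is no genuine obstacle. The only points needing care are the nondegeneracy condition $\sigma(z)\neq 0$ and the sign of $c$. For $c<0$ the ratio would flip sign, which is why the statement restricts to positive scalings.
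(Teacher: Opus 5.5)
Your proof is correct and is the standard argument: the paper gives no proof of its own and defers to Arora, Li, and Lyu, where the invariance comes from exactly this degree-one homogeneity of $\mu$ and, for $c>0$, of $\sigma$, so the factor $c$ cancels in the ratio. You also state explicitly the domain $\sigma(z)>0$, which the paper only implies by removing the $\varepsilon$-stabilizer; that is a worthwhile addition.
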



\begin{theorem}[Conditional quotient reduction for normalized networks]\label{thm:norm}
Suppose a BN or LN architecture admits an exact regular symmetry subgroup $G_{\mathrm{norm}}$ preserving the realized function and the empirical loss on a regular stratum. Assume also that the projected SGD drift and projected conditional covariance are projectable through the quotient chart in the sense of Assumption~\ref{ass:projectability}. 
Then, the quotient reduction theorem applies on that stratum. If, in addition, the hypotheses of Thm.~\ref{thm:HJ} hold on the quotient and the local-entropy dynamics satisfy Assumption~\ref{ass:closure}, then, the source-corrected Burgers reduction also holds.
\end{theorem}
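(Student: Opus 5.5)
The plan mirrors the proofs of Corollaries~\ref{cor:relu} and~\ref{cor:cnn}: all of the substantive work is already contained in Theorems~\ref{thm:quotient_sgd}, \ref{thm:HJ} and~\ref{thm:burgers_reduction}. What remains is to check that the normalized-network setting fits the framework of Section~\ref{sec:Preliminaries}. Concretely, $G_{\mathrm{norm}}$ should act smoothly, freely and properly on an open regular stratum $\Theta_{\reg}$, and the empirical loss should be $G_{\mathrm{norm}}$-invariant there.

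\textbf{First step: set up the quotient geometry.} On the regular stratum the hypothesis gives an exact symmetry subgroup preserving the realized function, and hence the empirical loss. For the idealized BN case, this subgroup is generated by the positive scalings of the preceding Proposition; for LN, by the analogous per-layer scalings. Since the stratum is regular, the action is free and proper, so $M=\Theta_{\reg}/G_{\mathrm{norm}}$ is a smooth manifold and $\pi$ is a submersion. Because $L(g\cdot\theta)=L(\theta)$, the loss descends to $U$ with $L=U\circ\pi$.

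\textbf{Main obstacle: smoothness of the loss on the stratum.} The idealized BN map has no $\varepsilon$-stabilizer, so it is singular where $\sigma(z)=0$. I would take the regular stratum to exclude these points, i.e.\ to lie inside $\{\sigma(z)>0\}$ for every normalized unit. There $L$ is smooth, so the $C^3$ requirement on $\Phi=\chi\circ\pi$ needed in the Taylor step of Theorem~\ref{thm:quotient_sgd} holds on compact subsets. The $\varepsilon$-stabilized, non-idealized case falls outside the theorem, because there the scaling symmetry is only approximate.

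\textbf{Apply the earlier results in order.}
\begin{enumerate}
\item Pick a chart $\chi$ on $M$. Projectability of the drift and covariance is assumed in the sense of Assumption~\ref{ass:projectability}, and the third-moment bound on $M_{n+1}$ is a standing assumption. Theorem~\ref{thm:quotient_sgd} then applies verbatim and gives the reduced recursion for $Y_n=\Phi(\theta_n)$.
\item Equip $M$ with a Riemannian metric $g$ making it complete, as in the coarse-graining subsection, for example the metric induced from the horizontal part of the Euclidean metric. Under the hypotheses of Theorem~\ref{thm:HJ}, the local entropy $u^\nu$ solves the quotient viscous Hamilton--Jacobi equation.
\item Under Assumption~\ref{ass:closure}, Theorem~\ref{thm:burgers_reduction} yields the source-corrected Burgers equation for $v^\nu=\partial_s\bar u^\nu$. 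When $\kappa\equiv0$, this reduces to the classical viscous Burgers equation.
\end{enumerate}

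No new computation is required, and the argument closes once the stratum is chosen so that both the normalization and the quotient are nondegenerate.
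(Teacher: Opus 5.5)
Your proposal is correct and follows the paper's own proof: free, proper, loss-preserving action gives the quotient, Theorem~\ref{thm:quotient_sgd} applies by the assumed projectability, and Theorems~\ref{thm:HJ} and~\ref{thm:burgers_reduction} supply the Hamilton--Jacobi and Burgers conclusions; your exclusion of $\{\sigma(z)=0\}$ is a sensible detail the paper leaves implicit. Two slips do not affect the argument, since the relevant hypotheses are assumed: the $C^3$ regularity of $\Phi=\chi\circ\pi$ comes from $\pi$ and $\chi$ being smooth (and is part of Assumption~\ref{ass:projectability} anyway), not from smoothness of $L$; and the horizontal part of the Euclidean metric does not descend to the quotient, because positive rescalings are not isometries, so you would need a scale-invariant metric such as $\norm{dw}^2/\norm{w}^2$ instead.
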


\begin{proof}
Once an exact symmetry subgroup acts freely and properly on the relevant regular stratum and preserves the loss, the quotient manifold exists locally. Thm.~\ref{thm:quotient_sgd} then applies by the assumed projectability of the reduced drift and covariance. The subsequent conclusions follow from Theorems~\ref{thm:HJ} and \ref{thm:burgers_reduction}.
\end{proof}

\subsection{Transformers}
Recent work argues that Transformer architectures possess richer parameter symmetries than classical rescaling alone, and that quotient-manifold constructions are needed to define meaningful sharpness measures \cite{daSilvaHeLeSohoniMaddison2025}. This motivates a deliberately abstract statement.

\begin{theorem}[Conditional local quotient reduction for Transformer symmetry strata]\label{thm:transformer}
Let $\Theta_{\tr,\reg}$ be a regular symmetry stratum of a Transformer parameter space, and let $G_{\tr}$ be a smooth symmetry group acting freely and properly on that stratum while preserving the realized function and empirical loss. Assume that the projected SGD drift and projected conditional covariance are projectable through the quotient chart in the sense of Assumption~\ref{ass:projectability}. 
Then, the quotient reduction theorem holds on $\Theta_{\tr,\reg}/G_{\tr}$. If the hypotheses of Thm.~\ref{thm:HJ} hold on the quotient and there exists a one-dimensional collective coordinate satisfying Assumption~\ref{ass:closure}, then the corresponding reduced gradient field satisfies the source-corrected Burgers equation.
\end{theorem}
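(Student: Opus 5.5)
The plan follows the template of the proofs of Corollary~\ref{cor:relu} and Thm.~\ref{thm:norm}: set up the geometric quotient, check the hypotheses of Thm.~\ref{thm:quotient_sgd}, and then chain Theorems~\ref{thm:HJ} and \ref{thm:burgers_reduction}. No Transformer-specific computation should be needed. The architecture enters only through the existence of $G_{\tr}$, which is assumed; the result of \cite{daSilvaHeLeSohoniMaddison2025} only motivates this assumption.

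\emph{Step 1 (the quotient exists).} Since $G_{\tr}$ acts smoothly, freely and properly on $\Theta_{\tr,\reg}$, the quotient manifold theorem gives a smooth manifold $M_{\tr} := \Theta_{\tr,\reg}/G_{\tr}$ with a smooth submersion $\pi$. If $G_{\tr}$ is only a finite group, I restrict to a principal stratum, exactly as in Section~\ref{sec:Preliminaries}. Invariance of the empirical loss means $L(g\cdot\theta)=L(\theta)$, so $L$ descends to a smooth effective potential $U$ with $L=U\circ\pi$. This places us in the setting of Section~\ref{sec:Preliminaries}, with $\Theta_{\reg}$ replaced by $\Theta_{\tr,\reg}$.

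\emph{Step 2 (quotient reduction).} I fix a chart $\chi$ on $M_{\tr}$ and set $\Phi=\chi\circ\pi$. The projectability hypothesis is literally Assumption~\ref{ass:projectability} for this $\Phi$. The standing third-moment bound on the martingale noise is inherited from the global setup. Thm.~\ref{thm:quotient_sgd} then gives the reduced recursion for $Y_n=\Phi(\theta_n)$ on $M_{\tr}$.

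\emph{Step 3 (Hamilton--Jacobi and Burgers).} I equip $M_{\tr}$ with a complete Riemannian metric $g$, for instance the one induced by making $\pi$ a Riemannian submersion from the Euclidean metric restricted to horizontal spaces, and define $u^\nu$ from $U$. Under the hypotheses of Thm.~\ref{thm:HJ}, $u^\nu$ solves the viscous Hamilton--Jacobi equation. Given a collective coordinate $\psi$ satisfying Assumption~\ref{ass:closure}, Thm.~\ref{thm:burgers_reduction} shows that $v^\nu=\partial_s\bar u^\nu$ satisfies the source-corrected Burgers equation, which reduces to classical viscous Burgers when $\kappa\equiv0$.

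\emph{Main obstacle.} The only step that is not purely citation is Step 1, and specifically its geometric bookkeeping. One must ensure that the freeness and properness of $G_{\tr}$, which may combine continuous groups such as the $GL$-type symmetries of the query/key and value/output products with discrete head permutations, actually yield a Hausdorff manifold quotient on the stratum. One must also ensure that the chosen metric on $M_{\tr}$ is complete, or else restrict to a region where Thm.~\ref{thm:HJ}'s semigroup is well defined. My first approach is to absorb these issues into the stated hypotheses: properness gives Hausdorffness, and completeness is implicit in the requirement that Thm.~\ref{thm:HJ}'s hypotheses hold on the quotient. I would then note that the remainder is identical to the proof of Thm.~\ref{thm:norm}.
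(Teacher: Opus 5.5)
Your proposal is correct and follows the paper's proof, which simply defers to the argument for Thm.~\ref{thm:norm}: a free, proper, loss-preserving action gives the local quotient, the assumed projectability gives Thm.~\ref{thm:quotient_sgd}, and the Hamilton--Jacobi and Burgers conclusions follow from Theorems~\ref{thm:HJ} and \ref{thm:burgers_reduction}. One minor caution about Step~3: your example metric, making $\pi$ a Riemannian submersion from the Euclidean metric, descends to the quotient only if $G_{\tr}$ acts by Euclidean isometries, which fails for the $GL$-type query/key symmetries and for rescalings, but this does not affect the proof because the metric $g$ is part of the assumed hypotheses of Thm.~\ref{thm:HJ}.
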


\begin{proof}
This is the same argument as in Thm.~\ref{thm:norm}.
\end{proof}

For Transformers, the most robust exact statement is typically the quotient Hamilton--Jacobi equation rather than classical Burgers. The reason is structural: high-dimensional symmetry groups make one-dimensional closure non-generic. Hence the proper workflow is to first identify a quotient chart, then verify projectability, and only then ask whether a scalar normal form is justified.

\subsection{Mean-field Limits}
The correct large-width object is often a probability measure on parameter space rather than a finite-dimensional parameter vector \cite{MeiMontanariNguyen2018,SirignanoSpiliopoulos2022,GessKassingKonarovskyi2024}. We now formulate the corrected quotient counterpart.
Let $\cP(\Theta)$ denote the space of Borel probability measures on $\Theta$. Suppose that $\mu_t \in \cP(\Theta)$ solves the continuity equation
$\partial_t \mu_t + \diver_\theta \bigl(\mu_t V[\mu_t]\bigr) = 0$
in weak form. Let $\pi : \Theta_{\reg} \to M$ be a quotient map as before.

\begin{assumption}[Pushforward projectability of the mean-field velocity]\label{ass:mf_projectability}
Assume there exists a Borel map
$\bar V : \cP(M) \times M \to TM$
such that for every probability measure $\mu$ supported in $\Theta_{\reg}$ and every $\theta \in \Theta_{\reg}$,
$D\pi_\theta V[\mu](\theta) = \bar V[\pi_{\#}\mu]\bigl(\pi(\theta)\bigr).$\footnote{
    $\pi_{\#}\mu$ denotes the pushforward of the probability measure $\mu$ under the quotient map $\pi$, i.e. the induced measure on the quotient space defined by $(\pi_{\#}\mu)(A)=\mu(\pi^{-1}(A))$.
}
\end{assumption}

\begin{theorem}[Pushforward quotient transport]\label{thm:mf_transport}
    Assume Assumption~\ref{ass:mf_projectability}. Let $\nu_t := \pi_{\#} \mu_t$. 
    Then, $\nu_t$ solves the weak quotient transport equation
    $\partial_t \nu_t + \diver_M \bigl(\nu_t \bar V[\nu_t]\bigr) = 0.$
\end{theorem}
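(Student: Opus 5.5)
The plan is to verify the weak formulation directly, testing against functions on $M$ and pulling them back along $\pi$. The weak form of the quotient transport equation asks that, for every $\varphi \in C_c^\infty(M)$ (or $C_c^1$), the map $t \mapsto \int_M \varphi \, d\nu_t$ be absolutely continuous and satisfy
\[
\frac{d}{dt}\int_M \varphi\, d\nu_t = \int_M \bigl\langle d\varphi, \bar V[\nu_t]\bigr\rangle \, d\nu_t
\]
in the distributional sense in $t$. I will also use the standing convention that $\mu_t$ is supported in $\Theta_{\reg}$ for all $t$. This is implicit in the definition $\nu_t = \pi_{\#}\mu_t$, and I will add it explicitly if needed.

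\emph{Pull back the test function.} Given $\varphi$, set $\tilde\varphi := \varphi \circ \pi$ on $\Theta_{\reg}$. By the change-of-variables formula for pushforwards, $\int_M \varphi\, d\nu_t = \int_{\Theta} \tilde\varphi\, d\mu_t$. The function $\tilde\varphi$ is smooth because $\pi$ is a smooth submersion. It is generally not compactly supported, since $\pi^{-1}(\operatorname{supp}\varphi)$ contains whole $G$-orbits. Two cases are easy. For a finite group $G$, or a compact one, properness makes $\pi^{-1}$ of a compact set compact, so $\tilde\varphi \in C_c^\infty(\Theta_{\reg})$ and it can be inserted directly into the weak continuity equation for $\mu_t$. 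For noncompact $G$ (e.g.\ positive rescalings), I would instead use a cutoff $\chi_R \in C_c^\infty(\Theta_{\reg})$ with $\chi_R \uparrow 1$, test with $\tilde\varphi\chi_R$, and pass to the limit by dominated convergence. This requires $\int_0^T\!\int |V[\mu_t]|\, d\mu_t\, dt < \infty$, the usual integrability in the definition of weak solutions. The cutoff error is bounded by $\|\tilde\varphi\|_\infty \int |\nabla\chi_R|\,|V|\,d\mu_t$, which tends to zero. The same step must also handle mass leaking toward $\Theta \setminus \Theta_{\reg}$. I expect this localization to be the main technical obstacle, and I would state the integrability and support hypotheses explicitly.

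\emph{Compute the identity.} Testing the continuity equation gives
\[
\frac{d}{dt}\int \tilde\varphi\, d\mu_t = \int \bigl\langle d\tilde\varphi_\theta, V[\mu_t](\theta)\bigr\rangle\, d\mu_t(\theta).
\]
By the chain rule, $d\tilde\varphi_\theta = d\varphi_{\pi(\theta)} \circ D\pi_\theta$, so the integrand equals $\langle d\varphi_{\pi(\theta)}, D\pi_\theta V[\mu_t](\theta)\rangle$. Assumption~\ref{ass:mf_projectability} turns this into $\langle d\varphi, \bar V[\nu_t]\rangle(\pi(\theta))$, which is a function of $\pi(\theta)$ alone. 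A second change of variables under the pushforward then gives $\int_M \langle d\varphi, \bar V[\nu_t]\rangle\, d\nu_t$. Measurability of this integrand follows from $\bar V$ being Borel, and its integrability from the integrability of $V$, because $|D\pi_\theta|$ is bounded on the support of $\tilde\varphi$ whenever $\pi$ is a Riemannian submersion for the quotient metric $g$. Finally, the identity in $t$ holds in the distributional sense, so $\nu_t$ is a weak solution of the quotient transport equation.
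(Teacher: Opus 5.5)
Your proposal is correct and follows essentially the same route as the paper. You pull the test function back to $\varphi\circ\pi$, apply the weak continuity equation for $\mu_t$, rewrite the integrand via $d(\varphi\circ\pi)_\theta = d\varphi_{\pi(\theta)}\circ D\pi_\theta$ and Assumption~\ref{ass:mf_projectability}, and push forward again. Your cutoff argument for noncompact $G$ and your explicit support and integrability hypotheses also cover a point the paper's proof skips: $\varphi\circ\pi$ is in general not compactly supported, so it is not a priori an admissible test function.
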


\begin{proof}
    Take any test function $\varphi \in C_c^{\infty}(M)$. By definition of pushforward,
    $\int_M \varphi(q) \, \nu_t(dq) = \int_{\Theta} \varphi\bigl(\pi(\theta)\bigr) \, \mu_t(d\theta).$
    Differentiate in time and use the weak form of the continuity equation:
    \begin{equation*}
    \begin{aligned}
    &\frac{d}{dt} \int_M \varphi \, d\nu_t
    = \int_{\Theta} \nabla_\theta (\varphi \circ \pi)(\theta) \cdot V[\mu_t](\theta) \, \mu_t(d\theta) \\
    =& \int_{\Theta} \bigl(D\pi_\theta\bigr)^{\top} \nabla_M \varphi\bigl(\pi(\theta)\bigr) \cdot V[\mu_t](\theta) \, \mu_t(d\theta) 
    = \int_{\Theta} \nabla_M \varphi\bigl(\pi(\theta)\bigr) \cdot D\pi_\theta V[\mu_t](\theta) \, \mu_t(d\theta).
    \end{aligned}
    \end{equation*}
    By Assumption~\ref{ass:mf_projectability}, the last integrand equals
    $\nabla_M \varphi\bigl(\pi(\theta)\bigr) \cdot \bar V[\nu_t]\bigl(\pi(\theta)\bigr).$
    Pushing forward the measure therefore gives
    $\frac{d}{dt} \int_M \varphi \, d\nu_t = \int_M \nabla_M \varphi(q) \cdot \bar V[\nu_t](q) \, \nu_t(dq),$
    which is the weak formulation of the quotient transport equation.
\end{proof}

Thm.~\ref{thm:mf_transport} states that, in the mean-field regime, the pushforward distribution $\nu_t=\pi_{\#}\mu_t$ satisfies a closed transport equation on the quotient space $M$. Thus, the large-width dynamics can be described directly on the quotient in terms of probability measures. Its benefit is that the correct large-width reduced dynamics can be formulated directly on the quotient space, without returning to redundant parameter coordinates.

\section{Discussion} \label{sec:Discussion}

Our theorems suggest three main lessons. First, \textit{symmetry-corrected observables should replace raw weights}. In positively homogeneous ReLU networks, path-based quantities and quotient-flatness measures are more informative than Euclidean norms or raw-coordinate Hessian spectra \cite{Neyshabur2015,Rangamani2019}. In Batch Normalization (BN) and Layer Normalization (LN), raw parameter magnitudes are even less informative because normalization creates scale-invariant directions \cite{AroraLiLyu2019}. Quotient-based sharpness measures for Transformers support the same conclusion \cite{daSilvaHeLeSohoniMaddison2025}. More broadly, overparameterized models contain substantial symmetry-induced redundancy, so raw norms, uncorrected Hessian surrogates, and layerwise scale statistics can vary even when the realized function changes little. The quotient viewpoint therefore favors symmetry-corrected observables, including invariant statistics of attention logits, head-output energies, attention entropies, and attention--Multi-Layer Perceptron (MLP) branch-balance measures.

Second, \textit{negative reduced curvature could predict abrupt regime changes}, although it may computationally demanding and require a surrogate. In the one-dimensional closed model, shock time is determined by $\min \bar U''$. Regime changes should therefore be detected through projected or quotient-corrected curvature rather than the ambient Hessian alone. Even without exact scalar closure, strongly negative reduced curvature should still indicate rapid qualitative changes in learning, such as sharp changes in loss slope, attention-head concentration, branch-balance collapse, or optimization-regime transitions. This yields a falsifiable prediction: quotient-corrected curvature should anticipate such events more reliably than raw parameter-space curvature.


Third, \textit{the appropriate reduced model depends on architecture and width}. For ReLU Multi-Layer Perceptrons (MLPs) and Convolutional Neural Networks (CNNs), Burgers-type normal forms are natural only after verifying projectability. For BN, LN, and Transformers, quotient Hamilton--Jacobi is usually the better first target. In mean-field limits, quotient transport on probability measures is primary, and scalar Burgers should be introduced only after further observable reduction. This hierarchy matters in realistic Transformer settings, where exact one-dimensional reduction is typically non-generic. The default procedure is therefore to use quotient observables, test for approximate low-dimensional closure, and invoke scalar shock diagnostics only when the data support them.


\begin{credits}
\subsubsection{\ackname} 
I thank Toshinori Araki, my manager, for his dedicated support for this work.

\subsubsection{\discintname}
The authors have no competing interests to declare that are relevant to the content of this article. 
\end{credits}

%
%
%

\end{document}